\DeclareMathOperator*{\argmax}{arg\,max}
\DeclarePairedDelimiter\abs{\lvert}{\rvert}%
\DeclarePairedDelimiter\norm{\lVert}{\rVert}%
\let\oldabs\abs
\def\abs{\@ifstar{\oldabs}{\oldabs*}}
\let\oldnorm\norm
\def\norm{\@ifstar{\oldnorm}{\oldnorm*}}
\newcommand{\pushright}[1]{\ifmeasuring@#1\else\omit\hfill$\displaystyle#1$\fi\ignorespaces}
\newcommand{\pushleft}[1]{\ifmeasuring@#1\else\omit$\displaystyle#1$\hfill\fi\ignorespaces}
\newcommand{\R}{\mathbb{R}}
\def\cX{{\mathcal{X}}}
\def\cY{{\mathcal{Y}}}
\def\cZ{{\mathcal{Z}}}
\theoremstyle{plain}
\declaretheorem[name=Lemma]{lemma}
\declaretheorem[name=Proposition]{proposition}
\newtheorem{definition}{Definition}
\newcommand{\Yx}{\mathcal{Y}(x)}
\newcommand{\hx}{\hat{x}}
\newcommand{\Y}{\mathcal{Y}}
\newcommand{\X}{\mathcal{X}}
\title{Robust Best-arm Identification in Linear Bandits}
\author{ Wei Wang \\
	University College London\\
	\href{mailto:ucabww2@ucl.ac.uk}{\texttt{ucabww2@ucl.ac.uk}} \\
	\And
	Sattar Vakili  \\
	MediaTek Research\\
	\href{mailto:sattar.vakili@mtkresearch.com}{\texttt{sattar.vakili@mtkresearch.com}} \\
        \And 
        Ilija Bogunovic  \\
	University College London\\
	\href{mailto:i.bogunovic@ucl.ac.uk}{\texttt{i.bogunovic@ucl.ac.uk}} \\
}
\begin{document}

\maketitle

\begin{abstract}
    We study the robust best-arm identification problem (RBAI) in the case of linear rewards. The primary objective is to identify a near-optimal robust arm, which involves selecting arms at every round and assessing their robustness by exploring potential adversarial actions. This approach is particularly relevant when utilizing a simulator and seeking to identify a robust solution for real-world transfer. To this end, we present an instance-dependent lower bound for the robust best-arm identification problem with linear rewards. Furthermore, we propose both static and adaptive bandit algorithms that achieve sample complexity that matches the lower bound. In synthetic experiments, our algorithms effectively identify the best robust arm and perform similarly to the oracle strategy. As an application, we examine diabetes care and the process of learning insulin dose recommendations that are robust with respect to inaccuracies in standard calculators. Our algorithms prove to be effective in identifying robust dosage values across various age ranges of patients.\looseness=-1
\end{abstract}

\keywords{robust best-arm identification, bandit optimization, sample complexity}

\vspace{-2pt}
\section{Introduction}
\vspace{-2pt}

In various real-world applications, such as drug discovery, clinical trials, or patient dose finding, the goal is to identify the optimal solution from a minimal number of expensive trials (samples), often by utilizing an expensive simulator. To speed up the experimental design process, numerous studies have framed this as the \emph{bandit best-arm identification} problem (\citet{audibert2010best, jamieson2014best, soare2014best}), which involves learning about the underlying process and identifying the best solution (arm) with minimal interactions.

The problem of identifying the best arm in non-robust situations has been extensively studied due to its connection to real-world problems. However, the best arm may not necessarily be found in a uniformly high-reward neighborhood. Consequently, the arm's performance can be greatly affected by unexpected changes, such as implementation errors or a gap between simulation and reality. 
For example, in diabetes care, determining the optimal insulin dosage for a patient poses a significant challenge since clinical experimentation with real patients is risky and ethically challenging (\citep{chen2021ethical, vayena2018machine}). Instead, expensive simulators and calculators are used to determine the appropriate dosage, which may not take into account certain patient characteristics even when fine-tuned (\cite{demirel2021safe}). Consequently, determining how to harness the capabilities of the simulator to provide recommendations for a safe, reliable, and efficient dosage in the real world remains a vital challenge, particularly in light of these inaccuracies.\looseness=-1

To address the previous challenges, we formulate a \emph{robust best-arm identification (RBAI)} problem. In particular, during each round, the learner chooses an arm and its corresponding adversarial action and then receives a noisy reward that corresponds to the chosen pair (see \Cref{fig:enter-label}). The objective is to efficiently discover the most resilient arm, meaning the one that yields the highest possible mean reward under the worst-case scenario. 
This amounts to learning about high-reward arms but also assessing their robustness by searching through possible adversarial actions. We study the above-mentioned problem in the classical setting of \emph{linear} reward models. 
Our work provides first instance-dependent lower bounds for the robust best arm identification setting, and an algorithm that matches these up to some logarithmic factors. \looseness=-1


\begin{figure*}
    \begin{multicols}{2}
            \centering
            \includegraphics[width=\textwidth, height=4cm]{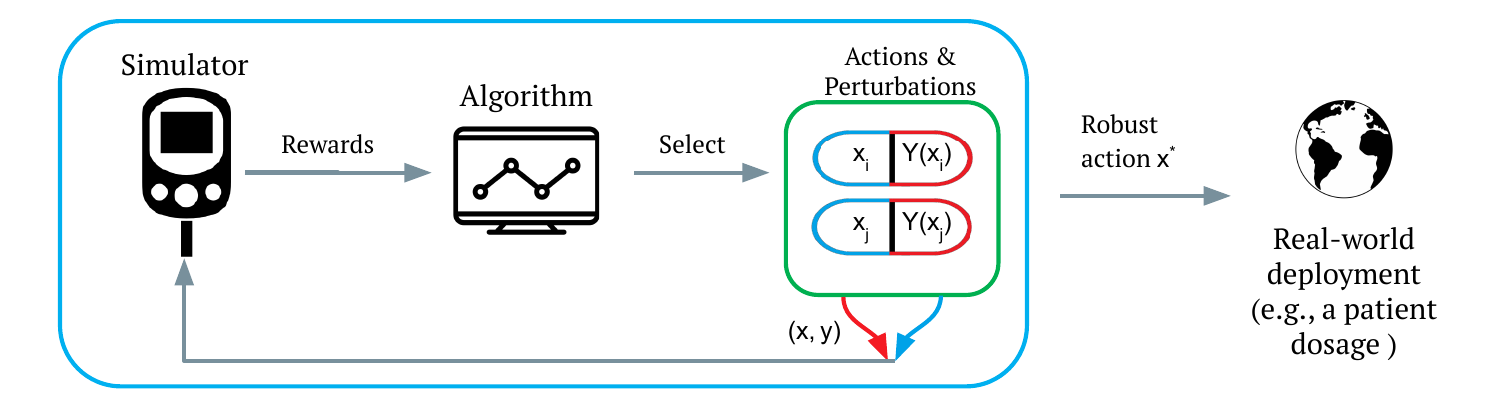}        
    \end{multicols}
    \vspace{-2ex}
    \caption{{An illustration of the robust best-arm identification setting.}}\label{fig:enter-label}
    \vspace{-2ex}
\end{figure*}

\textbf{Related Work.} Bandit problems with linear rewards have been extensively studied in prior works such as \cite{dani2008stochastic, rusmevichientong2010linearly,agrawal2013thompson, gabillon2012best, degenne2020gamification}. 
Our problem formulation as a linear multi-armed bandit best-arm identification problem \cite{audibert2010best} is similar to \citet{soare2014best}. While \cite{soare2014best} explored a similar problem in the standard non-robust setting, our emphasis lies in the robust setting, specifically in the discovery of the best-robust arms.
Furthermore, \citet{fiez2019sequential} consider a more general transductive setting that extends the concept of linear bandits. In their study, they present instance-dependent lower bounds specifically tailored for the transductive setting. Additionally, they propose an algorithm called RAGE, which achieves performance that closely aligns with these lower bounds. Our robust setting is different from the transductive one since our worst-performing arms are dependent on unknown reward parameters, making it impossible to construct predefined transductive arms.

In addition to addressing robustness, a significant body of prior research in the bandit problem has been dedicated to learning under various constraints for safety purposes. For instance, \cite{amani2019linear}, \cite{moradipari2021safe} and \cite{demirel2021safe} investigate the linear bandit setting, where the constraint function is unknown, and the exploration procedure must adhere to the constraints. \cite{pacchiano2021stochastic} require that the expectation of cost satisfies a given constraint value, while \cite{khezeli2020safe} focus on achieving an expected reward above a specific threshold. \citet{lindner2022interactively} introduce linear constrained BAI problems with unknown constraints, aiming to identify the optimal arm. \citet{wang2022best} also consider a BAI problem with linear constraints. However, in contrast to \cite{lindner2022interactively}, their work assumes unknown rewards and primarily concentrates on linear constraints. \looseness=-1

The linear bandit problem has seen exploration of various notions of robustness. Notably, \cite{bogunovic2021stochastic} and \cite{ding2022robust} focus on the standard linear bandit problem, aiming to mitigate adversarial corruptions in reward observations \cite{bogunovic2021stochastic, ding2022robust}. While \cite{bogunovic2021stochastic} specifically consider rewards attacks, \cite{ding2022robust} extend their investigation to encompass both rewards attacks and context attacks. Recently, \cite{he2022nearly} introduced a computationally efficient algorithm for the linear contextual bandit problem with rewards attacks \cite{he2022nearly}. Moreover, \cite{alieva2021robust} tackle the linear best-arm identification problem with rewards attacks in the fixed-budget setting, proposing a robust pure-exploration algorithm. 
In contrast to these works, this paper solely focuses on standard noisy observations. The primary objective is to identify the best-robust arm while operating under standard noise assumptions. Our robust linear bandit formulation differs significantly from the corrupted and heavy-tailed noise bandits aforementioned. Specifically, the concept of robustness in those works primarily revolves around dealing with (corrupted) observations, whereas our primary focus is on addressing action perturbations. This key distinction sets our approach apart from these works.\looseness=-1

Furthermore, the same problem as ours is considered within the broader framework of kernelized bandits in \citet{bogunovic2018adversarially}. In this setting, the returned arm may be subject to perturbations from an adversary, and the objective is to maintain a high function value even after such perturbations occur. The authors introduce the StableOPT algorithm, which leverages a combination of upper and lower confidence bound strategies when sampling arms and accounting for adversarial actions. However, the guarantees provided in their work adopt a worst-case perspective. In contrast, our algorithms employ distinct strategies rooted in experimental design techniques, and we provide instance-dependent sample complexity guarantees.\looseness=-1

\textbf{Main contributions.} 
Our objectives are twofold: firstly, to define the robust linear bandit problem, and secondly, to determine the instance-optimal sample complexity for this problem. \Cref{sec:statement} of our paper presents instance-dependent lower bounds for the robust bandit problem, which are distinct from the lower bounds established for standard linear bandits. Our bounds coincide with the current lower bound for linear bandits in the specific situation of singleton adversarial sets.
\Cref{sec:algorithms} presents algorithms for robust linear bandits and establishes the corresponding sample complexity results (\Cref{thm:static} and \Cref{thm:robust_rage}). Our analysis demonstrates that the sample complexity we obtain matches the lower bound, up to some logarithmic factors. In \Cref{sec:experiments}, we present our experiments that demonstrate the empirical advantage of our theoretically superior algorithm in two problem scenarios. We also illustrate the practical value of our approach in the context of the diabetes care problem, where our algorithm recommends robust solutions that outperform classical insulin calculators, thereby showcasing its robust performance. \looseness=-1

\vspace{-2pt}
\section{Problem Statement}
\vspace{-2pt}
\label{sec:statement}
We consider the \emph{robust} bandit problem where $\cX \subseteq \R^d$ denotes the finite set of arms of the learner, 
while $\cY(x) \subseteq \mathbb{R}^d$ denotes the finite set of possible perturbations for each $x \in \cX$. We assume that both $\cX$ and corresponding $\cY(x)$ for each $x \in \cX$ are known input sets.\looseness=-1 

Given an unknown parameter $\theta \in \R^d$, we consider a linear reward setting, where at each round $n$, the learner can choose both an arm $x_n\in \cX$ and an adversarial arm $y_n \in \cY(x_n)$ to receive a noisy reward: 
\begin{equation} \label{eq:observation_model}
    r_n = (x_n  - y_n)^\top \theta + \eta_n, 
\end{equation}
\noindent where $\eta_n$ is independent, $R$-sub-Gaussian. We often use the notation $z = x - y$ and $\cZ = \lbrace x-y: \forall x \in \cX, \forall y \in \cY(x)\rbrace$, and assume that $\|z\|_2\leq L$. We denote 
\begin{equation} \label{eq:best_robust_arm}
    x^*=\argmax_{x \in \cX}\min_{y \in \Yx}(x-y)^\top \theta,
\end{equation}
as the best robust arm in $\cX$. 
We assume the best robust arm is unique in the set $\mathcal{X}$, that is, the \textit{robust value gap} $\Delta_r(x^*,x) > 0$ for any $x \in \cX\setminus \{x^*\}$ where
\begin{equation}
    \label{eq:robust_gap}
    \Delta_r (x,x') = \min_{y \in \Yx}(x-y)^\top \theta - \min_{y' \in \cY(x')}(x'-y')^\top \theta.
\end{equation}
In comparison to the standard non-robust problem where the corresponding suboptimality gap is given by $\Delta (x,x') = (x-x')^\top \theta$, the above robust gap calculates the gap after posing the most adversarial actions on the arms and can be expressed as: 
\begin{align}
    \Delta_r (x,x') = \Delta (x,x') - \max_{y \in \mathcal{Y}(x)}y^\top \theta + \min_{y' \in \mathcal{Y}(x')}y'^\top \theta.
\end{align}
Based on the formulation presented above, it is evident that the robust gap $\Delta _r$ depends on the adversarial space $\mathcal{Y}(\cdot)$ of each arm and can either be larger or smaller than the standard gap.

We introduce the following robust best-arm identification problem (RBAI). Let $\hx_n$ be the estimated robust best arm returned by a bandit algorithm after $n$ steps. 
Our research focuses on the $\delta$-PAC setting. We aim to design an allocation strategy and a stopping criterion given $\delta \in (0,1)$ such that when the algorithm stops, the returned arm $\hat{x}_n$ satisfies $\mathbb{P}(\hat{x}_n = x^*) \geq 1-\delta,$ within the smallest number of steps (samples) $n$ as possible.\looseness=-1

\vspace{-2pt}
\subsection{Lower Bound}
\vspace{-2pt}










In this section, we present a lower bound for the complexity of the robust best-arm identification problem. This lower bound indicates the minimum number of samples required to distinguish the best robust arm from another arm that is the closest in terms of its robust value.


\begin{restatable}[RBAI Lower Bound]{theorem}{lowerbound}
    For any adversarial linear bandit environment $\nu = (\mathcal{X},\left \{ \mathcal{Y}(x)  \right \}_{x \in \mathcal{X}},\theta )$, there exists an alternative environment $\nu' = (\mathcal{X},\left \{ \mathcal{Y}(x)  \right \}_{x \in \mathcal{X}},\theta' )$ having the same input sets $\mathcal{X}$ and $\left \{ \mathcal{Y}(x)  \right \}_{x \in \mathcal{X}}$ but a different best robust arm, such that the number of pulls $\tau$ needed by any $\delta$-PAC static allocation strategy to distinguish between the two problems is such that
    \begin{align}
          \mathbb{E}[\tau] &\geq 
          C_{\delta}\max_{y \in \mathcal{Y}(x^*)}\max _{x'\in \cX\setminus x^*}\min_{y'\in\cY(x')}\frac{\|x^* - y - (x' - y') \|^2_{A_{\lambda}^{-1}}}{\max\{\Delta (x^*, y, x',y'),0\}^{2}},
        \label{eq:lowerbound-eq}
    \end{align}
    where $C_{\delta} = 2\log (1 / 2 \delta)$, 
    $\lambda$ is a probability distribution over arms which the allocation strategy follows (i.e., $\lambda(x,y)$ is the proportion of selecting $(x,y)$), $A_\lambda=\sum_{x,y} \lambda(x,y) (x-y) (x-y)^\top$ is the design matrix and $\Delta (x^*, y, x',y') = (x^*-y-(x'-y'))^\top\theta$.
    \label{thm1}
\end{restatable}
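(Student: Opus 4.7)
The plan is to apply the classical change-of-measure lower bound for best-arm identification, specialised to the robust linear reward model. The argument has two ingredients: the transportation lemma, which lower-bounds the expected log-likelihood ratio for any $\delta$-PAC algorithm, and an explicit construction of an alternative parameter $\theta'$ that swaps the best robust arm while staying close to $\theta$ in the $A_\lambda$-weighted norm.

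First I would invoke the transportation lemma. For any $\delta$-PAC algorithm and any alternative $\theta'$ whose best robust arm differs from $x^*$,
\begin{equation*}
    \sum_{x \in \mathcal{X},\, y \in \mathcal{Y}(x)} \mathbb{E}[N_{x,y}(\tau)]\, \mathrm{KL}(\nu_{x,y} \,\|\, \nu'_{x,y}) \;\geq\; \log(1/(2\delta)),
\end{equation*}
where $N_{x,y}(\tau)$ counts the times the pair $(x,y)$ is played. Under the $R$-sub-Gaussian linear model~\eqref{eq:observation_model}, each KL term is bounded above by $\tfrac{1}{2R^2}((x-y)^\top(\theta - \theta'))^2$. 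With static allocation proportions $\lambda$, $\mathbb{E}[N_{x,y}(\tau)] = \mathbb{E}[\tau]\,\lambda(x,y)$, so the sum collapses to $\mathbb{E}[\tau]\,(\theta - \theta')^\top A_\lambda (\theta - \theta')/(2R^2)$. Rearranging yields $\mathbb{E}[\tau] \geq C_\delta/((\theta - \theta')^\top A_\lambda (\theta - \theta'))$ for every admissible $\theta'$.

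Next I would exhibit an explicit family of alternatives indexed by triples $(y, x', y')$ with $y \in \mathcal{Y}(x^*)$, $x' \in \mathcal{X}\setminus\{x^*\}$, $y' \in \mathcal{Y}(x')$, and $\Delta(x^*, y, x', y') > 0$. Setting $z = x^* - y - (x' - y')$, define
\begin{equation*}
    \theta' \;=\; \theta \;-\; \frac{\Delta(x^*, y, x', y')}{\|z\|^2_{A_\lambda^{-1}}}\, A_\lambda^{-1}\, z.
\end{equation*}
A direct computation gives $z^\top \theta' = 0$ and $(\theta - \theta')^\top A_\lambda (\theta - \theta') = \Delta(x^*, y, x', y')^2/\|z\|^2_{A_\lambda^{-1}}$. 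Substituting into the bound of the previous paragraph produces exactly the ratio appearing in the theorem, and the outer $\max_y\max_{x'}$ arises by optimising over the choice of $(y, x')$.

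The main obstacle is certifying that this $\theta'$ is genuinely admissible, i.e.\ that the best robust arm under $\theta'$ is not $x^*$. Because robust values involve a minimum over adversary actions, the equality $(x^* - y)^\top \theta' = (x' - y')^\top \theta'$ is not enough on its own; one also needs $(x^* - y - (x' - \tilde{y}))^\top \theta' \leq 0$ to hold (after possibly strengthening the perturbation) for every $\tilde{y} \in \mathcal{Y}(x')$ that the adversary might prefer under $\theta'$. I would address this by taking $y$ and $y'$ to be the adversary-optimal responses to $x^*$ and $x'$ under $\theta'$, and by interpreting $\min_{y' \in \mathcal{Y}(x')}$ in the theorem as selecting the most constraining adversary response for $x'$ among those with $\Delta > 0$; the $\max\{\Delta,0\}$ truncation then discards triples along which no finite perturbation in the direction $A_\lambda^{-1} z$ can flip the best robust arm. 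Assembling these choices reproduces the $\max_y \max_{x'} \min_{y'}$ form stated in the theorem, with the $C_\delta = 2\log(1/(2\delta))$ prefactor arising once the $R^2$ factor is absorbed into the noise scaling.
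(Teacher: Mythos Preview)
Your overall strategy---transportation lemma plus a rank-one perturbation of $\theta$---matches the paper's, and your computation of $(\theta-\theta')^\top A_\lambda(\theta-\theta')=\Delta^2/\|z\|^2_{A_\lambda^{-1}}$ is exactly what the paper obtains via the Lagrangian. The substantive gap is the one you flag yourself. Your $\theta'$ enforces $(x^*-y-(x'-y'))^\top\theta'=0$ for \emph{one} chosen $y'\in\mathcal{Y}(x')$, but for $x^*$ to cease being the best robust arm you need $(x^*-y-(x'-\tilde y))^\top\theta'\le 0$ for \emph{every} $\tilde y\in\mathcal{Y}(x')$. Your proposed fix---taking $y,y'$ to be adversary-optimal under $\theta'$---is circular, since $\theta'$ already depends on $(y,y')$; and ``interpreting $\min_{y'}$ as the most constraining response'' is an assertion, not an argument.

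The paper sidesteps this by never verifying admissibility of a concrete $\theta'$. It poses, for fixed $y\in\mathcal{Y}(x^*)$ and $x'\neq x^*$, the problem $\min_\varepsilon \varepsilon^\top A_\lambda\varepsilon$ subject to $v_{y'}^\top\varepsilon>\Delta(x^*,y,x',y')$ for \emph{all} $y'\in\mathcal{Y}(x')$, and then observes that dropping all but one constraint can only decrease the minimum. Hence the full constrained optimum is at least $\max_{y'}\Delta(x^*,y,x',y')^2/\|v_{y'}\|^2_{A_\lambda^{-1}}$, which after inversion to bound $\mathbb{E}[\tau]$ becomes the $\min_{y'}$ in the statement; the truncation $\max\{\Delta,0\}$ enters because constraints with $\Delta\le 0$ are vacuous. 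This relaxation argument is the missing step; once you insert it, your write-up goes through. Two smaller points: your $\theta'$ gives $z^\top\theta'=0$ rather than strict inequality, so a slack $\alpha\downarrow 0$ is needed (the paper does this); and saying the KL is ``bounded above'' by the Gaussian expression is the wrong direction for a lower bound on $\mathbb{E}[\tau]$---the paper simply takes $\eta\sim\mathcal{N}(0,1)$ so that the KL is an equality.
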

Our proof (\Cref{app:lower_bound}) employs a proof strategy akin to the lower bounds for standard linear bandits \cite{soare2014best, fiez2019sequential, lindner2022interactively}. The approach involves analyzing the log-likelihood ratio of a sequence of observations (that correspond to the sequence of selected arms and adversarial actions) made in two minimally distinct robust bandit instances and exploring how to achieve a distinct solution with a minimal log-likelihood ratio. In contrast to the standard linear bandit case, we have a minimax optimization problem that we avoid explicitly solving. Instead, we solve a regular convex optimization problem for each $y’ \in \cY(x')$ separately and obtain the final sample complexity guarantee by maximizing over different solutions.\looseness=-1

We would like to highlight that in the robust setting, the change-of-measure argument results in a constrained non-convex optimization problem, contrasted with the standard setting where it results in a constrained convex optimization problem. To address this challenge, we introduce several relaxations to transform the constrained non-convex optimization problem into a constrained convex one, facilitating its solution. In Section~\ref{sec:algorithms}, we derive an upper bound on the sample complexity of an adaptive algorithm, inspired by our lower bound. The order optimality of this algorithm's sample complexity demonstrates the tightness of our lower bound, affirming that the relaxations used to obtain the lower bound do not compromise its tightness.

The result obtained in \Cref{eq:lowerbound-eq}, although similar to the lower bounds presented in~\cite{soare2014best, fiez2019sequential}, 
is distinct from them in two aspects. First, it involves $\max_{y\in\cY(x^*)}$ and $\min_{y'\in\cY(x)}$. This step embodies the robustness requirement that, $\forall x'\in\cX\setminus x^*$ and $\forall y\in\cY(x^*)$, at least one $y'\in\cY(x')$ exists, where the mean reward of $x'-y'$ is lower than the mean reward of $x^*-y$. Second, the gaps are truncated with $0$. 
We note that $\Delta (x^*, y, x',y') = (x^*-y-(x'-y'))^\top\theta$ can be negative in general. Therefore, we truncate it with $0$ to avoid considering arms with a negative gap. By definition of $x^*$, there is at least one $y' \in \cY(x')$ such that $\Delta(x^*,y, x',y')>0$ (for every $y$ and $x'$), thus the denominator in \Cref{eq:lowerbound-eq} is always larger than $0$ (since $y'$ is chosen to minimize the ratio). Moreover, we note that, for any fixed $x\in\cX\setminus x^*$, $\min_{y\in\cY(x^*)}\max_{y'\in\cY(x)}\Delta(x^*,y,x',y')=\Delta_r(x^*,x)$, indicating that the gap in \Cref{eq:lowerbound-eq} would be equal to the robust gap, if the numerator (representing the uncertainty in the estimate) was set equal for all arms. That is reflected in a worst-case bound on the right hand side presented in Proposition~\ref{prop2.2}. 



We also note the following relation to the non-robust (standard) BAI problem. In case $\Y(x)$ is the same singleton set for every $x \in \X$, then it holds $\Delta (x^*, y,x',y') = \Delta(x^*,x)$ for every $x \in \X$ where $\Delta(x^*,x):= (x^* -x)^\top\theta$. In such a case, RBAI reduces to the standard best-arm identification problem and the obtained lower bound in \Cref{eq:lowerbound-eq} reduces to the BAI lower bound (see \cite[Theorem 3.1]{soare2015sequential}).\looseness=-1

We aim to determine the sample complexity of algorithms for solving the RBAI problem. In order to do so, we will utilize the lower bound that we have derived to define the best sample complexity any algorithm can achieve.\looseness=-1
\begin{definition} [RBAI Sample Complexity] \label{def:sample complexity} 
The sample complexity of an RBAI instance $\nu$ is defined as: 
    \begin{equation}
        H_{\mathrm{R}}(\nu)=\min _\lambda \max_{y \in \mathcal{Y}(x^*)}\max _{x'\in \cX\setminus x^*}\min_{y'\in\cY(x')}\frac{\|x^* - y - (x' - y') \|^2_{A_{\lambda}^{-1}}}{\max\{\Delta (x^*, y, x',y'),0\}^{2}}.
    \end{equation}
\end{definition}
To have a point of comparison, it is also useful to establish a worst-case upper bound on $H_{\mathrm{R}}(\nu)$, which is provided in terms of the smallest robust gap by the following proposition:
\begin{restatable}{proposition}{worstcaselb}
    For any RBAI problem $\nu$, we have $H_{\mathrm{R}}(\nu) \leq 4d / \min_{x \in \cX \setminus \lbrace x^* \rbrace}\Delta_r(x^*,x)^2$, where $\Delta_r(x^*,x)$ is the robust gap from \Cref{eq:robust_gap}.
    \label{RBAI lb worst}
\end{restatable}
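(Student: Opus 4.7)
The plan is to bound the numerator and denominator of each term in $H_{\mathrm{R}}(\nu)$ separately by a judicious choice of the allocation $\lambda$ in the outer $\min$ and of the adversarial action $y'$ in the inner $\min$. For the numerator, I would invoke the Kiefer--Wolfowitz theorem on the set $\cZ=\{x-y:x\in\cX,y\in\cY(x)\}$: there exists a distribution $\lambda^\star$ on $\cZ$ such that the design matrix $A_{\lambda^\star}=\sum_z\lambda^\star(z)zz^\top$ satisfies $\max_{z\in\cZ}\|z\|^2_{A_{\lambda^\star}^{-1}}\le d$. Since both $x^\star-y$ and $x'-y'$ belong to $\cZ$ whenever $y\in\cY(x^\star)$ and $y'\in\cY(x')$, applying the inequality $\|u-v\|^2_{M}\le 2\|u\|^2_M+2\|v\|^2_M$ (valid for any PSD $M$) yields
\begin{equation*}
\|x^\star-y-(x'-y')\|^2_{A_{\lambda^\star}^{-1}}\;\le\;4d
\end{equation*}
uniformly over the feasible tuples.

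For the denominator, I would, for each pair $(y,x')$ with $y\in\cY(x^\star)$ and $x'\in\cX\setminus\{x^\star\}$, bound the inner $\min_{y'\in\cY(x')}$ from above by evaluating at the specific choice $\bar y'(x')\in\arg\min_{y''\in\cY(x')}(x'-y'')^\top\theta$, i.e.\ the true adversarial action against $x'$. For this particular $y'$,
\begin{equation*}
\Delta(x^\star,y,x',\bar y'(x'))=(x^\star-y)^\top\theta-\min_{y''\in\cY(x')}(x'-y'')^\top\theta
\;\ge\;\min_{\tilde y\in\cY(x^\star)}(x^\star-\tilde y)^\top\theta-\min_{y''\in\cY(x')}(x'-y'')^\top\theta
=\Delta_r(x^\star,x'),
\end{equation*}
where the inequality uses that $y\in\cY(x^\star)$ so $(x^\star-y)^\top\theta$ is at least its minimum over $\cY(x^\star)$. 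Since the best robust arm is assumed unique, $\Delta_r(x^\star,x')>0$, so the truncation $\max\{\cdot,0\}$ is inactive and the denominator is at least $\Delta_r(x^\star,x')^2\ge\min_{x\in\cX\setminus\{x^\star\}}\Delta_r(x^\star,x)^2$.

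Combining the two bounds, for each $(y,x')$ the inner $\min_{y'}$ of the ratio is upper bounded by $4d/\min_{x\neq x^\star}\Delta_r(x^\star,x)^2$, and this bound is independent of $(y,x')$; hence the subsequent $\max_y\max_{x'}$ preserves it. Finally, since $\lambda^\star$ is a feasible choice for the outer $\min_\lambda$, $H_{\mathrm{R}}(\nu)$ inherits the same upper bound, completing the argument.

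The proof is essentially routine once the two relaxations are identified; the only mildly delicate point is recognising that one may freely pick $y'=\bar y'(x')$ to upper bound the inner minimum, and that this specific choice is precisely what converts the generic gap $\Delta(x^\star,y,x',y')$ into the robust gap $\Delta_r(x^\star,x')$ regardless of which $y\in\cY(x^\star)$ the outer $\max$ selects. No obstacle is anticipated beyond invoking Kiefer--Wolfowitz, which implicitly requires $\cZ$ to span $\R^d$ (a standard assumption for the design matrix to be invertible).
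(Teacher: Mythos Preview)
Your proposal is correct and follows essentially the same approach as the paper: both bound the numerator uniformly by $4d$ via Kiefer--Wolfowitz combined with the triangle inequality (the paper writes it as $\|z^*-z'\|_{A_\lambda^{-1}}\le 2\max_z\|z\|_{A_\lambda^{-1}}$, you use the equivalent $\|u-v\|^2_M\le 2\|u\|^2_M+2\|v\|^2_M$), and both lower bound the denominator by the minimum robust gap squared. The only cosmetic difference is that the paper first makes the numerator independent of $y'$ so that the inner $\min_{y'}$ becomes a $\max_{y'}$ on the denominator alone, whereas you directly evaluate the ratio at the specific $\bar y'(x')$; these are two phrasings of the same relaxation.
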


We also utilize the oracle allocation strategy to gain a better understanding of the lower bound. In particular, the oracle strategy selects arms based on the design that minimizes $H_{\mathrm{R}}$, i.e., 
\begin{equation}\label{eq:oracle}
     \lambda^{\star} \in \underset{\lambda}{\operatorname{argmin}} \max_{y \in \mathcal{Y}(x^*)}\max _{x'\in \cX\setminus x^*}\min_{y'\in\cY(x')}\frac{\|x^* - y - (x' - y') \|^2_{A_{\lambda}^{-1}}}{\max\{\Delta (x^*, y, x',y'),0\}^{2}}.
\end{equation}

We demonstrate in \Cref{oracle-all} that the sample complexity of this oracle strategy matches the previously established lower bound. However, this strategy is not feasible as it necessitates knowledge of the unknown parameter $\theta$. In the next section, we propose algorithms that alleviate this issue.

\vspace{-2pt}
\section{Algorithms}
\vspace{-2pt}
\label{sec:algorithms}
Our algorithms rely on constructing high-probability confidence intervals for the reward function based on noisy observations. To achieve this, we briefly review how to construct such confidence intervals from observations with sub-Gaussian noise. 
We use $\mathbf{x}_n,\mathbf{y}_n$ to denote a sequence of queried points $\lbrace (x_i, y_i) \rbrace_{i=1}^n$, and $\mathbf{z}_n = \mathbf{x}_n-\mathbf{y}_n$. 
We denote the ordinary least squares estimate based on the previously collected $n$ rewards as $\hat{\theta}_n = (\sum_{i=1}^n z_i z_i^\top )^{-1} (\sum_{i=1}^n z_i r_i)$, and we use $A_{\mathbf{z}_{n}} = \sum_{i=1}^n z_i z_i^\top$. Moreover, we also use the following notation $\| v \|^2_{M} := v^\top M v$ for a positive semi-definite matrix $M$. To establish valid confidence bounds, we employ the following well-known result, which holds in the non-adaptive case where decisions are independent of noise realizations.
\begin{lemma}
    Let $\hat{\theta}_n$ be the least-squares estimator obtained using the observed rewards coming from a fixed sequence $\mathbf{z}_{n} = \mathbf{x}_n-\mathbf{y}_n$. Then, the following holds
    \begin{align}
        \mathbb{P}\Big(\forall n \in \mathbb{N}, \forall z &\in \mathcal{Z},\left|z^{\top} \theta-z^{\top} \hat{\theta}_{n}\right| \leq \|z\|_{A_{\mathbf{z}_{n}}^{-1}} \sqrt{2\log \left( {| \mathcal{Z} |}  / \delta\right)}\Big) \geq 1-\delta.
    \end{align}
    \label{prop2.2}
\end{lemma}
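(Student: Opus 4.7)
The plan is to express the prediction error $z^\top(\hat\theta_n-\theta)$ as a linear combination of the noise variables, identify its effective sub-Gaussian proxy, and then close with a union bound over $\mathcal{Z}$ (and, if needed, over $n$). All of this is standard because the query sequence $\mathbf{z}_n$ is declared fixed, so no self-normalized martingale machinery is required.

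First I would substitute the observation model~\eqref{eq:observation_model} into the closed form of the ordinary least-squares estimator. Using $r_i = z_i^\top\theta+\eta_i$ along the fixed sequence, one immediately obtains
\begin{equation*}
\hat\theta_n-\theta \;=\; A_{\mathbf{z}_n}^{-1}\sum_{i=1}^n z_i\eta_i,
\end{equation*}
so for any $z\in\mathcal{Z}$ the scalar error takes the form $z^\top(\hat\theta_n-\theta) = \sum_{i=1}^n\alpha_i\eta_i$ with the deterministic weights $\alpha_i := z^\top A_{\mathbf{z}_n}^{-1}z_i$. A direct calculation, using $\sum_i z_iz_i^\top = A_{\mathbf{z}_n}$, shows $\sum_i\alpha_i^2 = z^\top A_{\mathbf{z}_n}^{-1}z = \|z\|_{A_{\mathbf{z}_n}^{-1}}^2$. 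Consequently, as a sum of independent $R$-sub-Gaussian variables with deterministic coefficients, $z^\top(\hat\theta_n-\theta)$ is $R\|z\|_{A_{\mathbf{z}_n}^{-1}}$-sub-Gaussian.

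Next I would apply the standard sub-Gaussian tail bound to obtain, for any fixed $z$ and $n$,
\begin{equation*}
\mathbb{P}\!\left(\bigl|z^\top(\hat\theta_n-\theta)\bigr|\geq \|z\|_{A_{\mathbf{z}_n}^{-1}}\sqrt{2\log(|\mathcal{Z}|/\delta)}\right) \;\leq\; \frac{2\delta}{|\mathcal{Z}|},
\end{equation*}
(absorbing $R$ into the constant, as the statement does). A union bound over the $|\mathcal{Z}|$ elements of $\mathcal{Z}$ then gives the displayed event for that fixed $n$ with probability at least $1-\delta$.

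Finally, to upgrade the bound to hold simultaneously for all $n\in\mathbb{N}$, I would replace $\delta$ by a summable sequence $\delta_n = \delta/(n(n+1))$ so that $\sum_n\delta_n = \delta$; the net effect is a mild inflation of the log factor to $\log(|\mathcal{Z}|n(n+1)/\delta)$, which the paper folds into its stated constants. The main point where I would take care is this time-uniform step: the displayed inequality is written uniformly in $n$, and one has to decide whether the intended reading is pointwise-in-$n$ (in which case the proof ends after one union bound over $\mathcal{Z}$) or truly uniform-in-$n$ (in which case the peeling/summable-$\delta_n$ argument above is needed). Either way, because the design is non-adaptive, no martingale-concentration apparatus is required and the argument is essentially a textbook application of sub-Gaussian concentration followed by a union bound.
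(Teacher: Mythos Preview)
The paper does not actually prove this lemma: it is introduced as a ``well-known result'' that ``holds in the non-adaptive case where decisions are independent of noise realizations'' and is used as a black box throughout the analysis, with no proof given in the main text or the appendices. Your argument is the standard one that underlies such statements (write $z^\top(\hat\theta_n-\theta)$ as a fixed linear combination of independent sub-Gaussian noise, read off the variance proxy $\|z\|_{A_{\mathbf{z}_n}^{-1}}^2$, apply the sub-Gaussian tail bound, and union bound over $\mathcal{Z}$), so there is nothing to compare against and your approach is exactly what one would expect.

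Two small remarks. First, your own computation yields a failure probability of $2\delta$ after the union bound (the two-sided tail introduces a factor of $2$); the paper's statement simply writes $1-\delta$, so this is one of the constants being swept under the rug along with $R$. Second, your observation about the ``$\forall n\in\mathbb{N}$'' quantifier is well taken: as written, a truly time-uniform guarantee would require the summable-$\delta_n$ trick you describe (or a maximal inequality), which would inflate the log term. In the paper's actual applications the lemma is only invoked at a single deterministic sample size per phase, so the pointwise-in-$n$ reading suffices for everything downstream.
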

To circumvent the inversion of singular matrices, we assume that $\cZ$ spans $\mathbb{R}^d$. This assumption is non-restrictive. If the span of $\cZ$ has a lower rank than $d$, we can employ an alternative basis, where all but the rank of $\cZ$ coordinates are always zero, and subsequently exclude them from the analysis.
Next, we design our first strategy for sampling in the RBAI problem.

\subsection{Robust Static Allocation Algorithm}
We begin by discussing the empirical \emph{stopping condition} for finding the best-robust arm, and then propose a static allocation rule based on it.
\looseness=-1

\textbf{Stopping condition.} 
We consider $\mathcal{C}(x')$ as the set of parameters $\theta'$ for which optimal robust arm is $x'$ and define a high-probability confidence set $\widehat{\mathcal{S}}(\mathbf{z}_n)$ centered at the estimated parameter $\hat{\theta}_n$ with $\mathbb{P}(\theta \in \widehat{\mathcal{S}}\left(\mathbf{z}_n\right)) \geq 1-\delta$. The aim is then to shrink $\widehat{\mathcal{S}}(\mathbf{z}_n)$ within $\mathcal{C}(x')$, and since $\theta \in \widehat{\mathcal{S}}(\mathbf{z}_n)$ w.h.p., we have that $\theta \in \mathcal{C}(x')$ and then $x' = x^*$. So with empirical gap $\widehat{\Delta}_n\left(x,y, x^{\prime},y'\right) = (x-y-(x'-y'))^\top\hat{\theta}_n$, the empirical stopping condition is\looseness=-1
\begin{align}
    \exists x \in \mathcal{X},&\forall y \in \mathcal{Y}(x), \forall x^{\prime} \in \mathcal{X}, \exists y' \in \mathcal{Y}(x'), \forall \theta' \in \widehat{\mathcal{S}}\left(\mathbf{z}_n\right),v^\top \theta' \geq 0 \Leftrightarrow  v^\top (\hat{\theta}_n - \theta') \leq \widehat{\Delta}_n\left(x,y, x^{\prime},y'\right),
    \label{eq:9}
\end{align}
where $v = x-y-(x'-y')$.
According to \Cref{prop2.2}, for any $\mathbf{z}_n$ we construct the following empirical confidence set from the idea of \Cref{eq:9},  \looseness=-1
    \begin{align}
        \widehat{\mathcal{S}}\left(\mathbf{z}_n\right)= 
        &\Big\lbrace\theta' \in \mathbb{R}^{d} \; \mathrm{ s.t. } \nonumber \exists x \in \mathcal{X},\forall y \in \mathcal{Y}(x), \forall x^{\prime} \in \mathcal{X} \setminus \lbrace x \rbrace, \exists y' \in \mathcal{Y}(x'), v^{\top}\left(\hat{\theta}_n-\theta'\right) \leq \|v\|_{A_{\mathbf{z}_{n}}^{-1}} \sqrt{2\log\left(|\mathcal{Z}|^{2} / \delta\right)}\Big\rbrace.
    \end{align}
 If $\widehat{\mathcal{S}}(\mathbf{z}_n)$ falls within $\mathcal{C}(x')$ for any $x' \in \cX$, the algorithm can stop and output the optimal arm $\Pi(\hat{\theta}_{n}):=\argmax_{x\in\cX}\min_{y\in\cY(x)}(x-y)^{\top}\hat{\theta}_n$. Hence, a feasible stopping condition can be written as:
 \begin{align}
     \exists x \in \mathcal{X},\forall y \in \mathcal{Y}(x), \forall x^{\prime} \in \mathcal{X}, \exists y' \in \mathcal{Y}(x'), \|x-y-(x'-y')\|_{A_{\mathbf{z}_{n}}^{-1}} \sqrt{2\log\big(\tfrac{|\mathcal{Z}|^{2}}{\delta}\big) }\leq \widehat{\Delta}_n\left(x,y, x^{\prime},y'\right).
     \label{eq:11}
 \end{align}
\textbf{Static allocation strategy.} To approach the stopping condition at each iteration, we need to devise a strategy to improve the estimation of $\hat{\theta}_n$ and reduce uncertainty. A natural approach is to choose the pair $z_n$ that our model is most uncertain about, which is commonly referred to as \emph{static} $G$-allocation. For any given $n$, the arms are selected according to the following $G$-allocation strategy:

\begin{equation} 
    \lambda^{G} \in \underset{\lambda}{\operatorname{argmin}} \max _{x\in \mathcal{X},y\in \mathcal{Y}(x)}\|x-y\|_{A_{\lambda}^{-1}},
    \label{eq:G-all}
\end{equation}
subject to $\lambda^{G} \in \lbrace \lambda \in \mathbb{R}^{|\cZ|}: \sum_{z \in \cZ} \lambda_z = 1, \lambda_z \geq 0 \rbrace$.

To implement such a static allocation algorithm, it is essential to round an allocation value $\lambda$ into a finite sequence of pairs represented by $z_1, \ldots, z_n$, and this requires a rounding procedure. There are efficient rounding procedures available in the experimental design literature that given $\varepsilon > 0$, the procedure can produce $(1+\varepsilon)$--approximate solution. The condition to achieve this is that $n$ should be larger than some minimum number of samples $r(\varepsilon)$. In this particular case, we make use of a standard rounding procedure from \citet[Chapter~12]{pukelsheim2006optimal} and $r(\varepsilon)=2\|\lambda\|_0/\varepsilon$ where $\varepsilon$ should be thought as a constant \citep{fiez2019sequential}.\looseness=-1

\begin{restatable}{theorem}{Gallocationsamplecomplexity}
\label{thm:static}
    If the G-allocation strategy is implemented with an $\varepsilon$-approximate rounding strategy and the stopping condition in \Cref{eq:11} is used, then
    \begin{equation}\nonumber
        \mathbb{P}\Big[N^{G} \leq \frac{32  d(1+\varepsilon) \log (|\cZ|^2 / \delta)}{\min_{x \in \cX \setminus \lbrace x^* \rbrace}\Delta_r(x^*,x)^2} \;\wedge \;\Pi\big(\hat{\theta}_{N^{G}}\big)=x^{*}\Big] \geq 1-\delta.
    \end{equation}
    where $\Delta_r(x^*,x)$ is the robust value gap.
\end{restatable}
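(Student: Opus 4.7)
The plan is to condition on a high-probability good event and then establish (i) correctness of the returned arm whenever the stopping condition fires, and (ii) that the stopping condition is guaranteed to fire by the claimed iteration count. The good event is obtained by applying \Cref{prop2.2} not to $\cZ$ itself but to the difference dictionary $\{z - z' : z, z' \in \cZ\}$ of cardinality at most $|\cZ|^2$. This yields, with probability at least $1-\delta$, for all $n$ and all pairs $(z, z')$,
\begin{equation*}
\left|(z-z')^{\top}(\hat{\theta}_n - \theta)\right| \leq \|z-z'\|_{A_{\mathbf{z}_n}^{-1}} \sqrt{2\log(|\cZ|^2/\delta)},
\end{equation*}
which is exactly the confidence width appearing in \Cref{eq:11}. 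All subsequent arguments are on this event.

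For correctness, assume \Cref{eq:11} triggers with witness arm $x$ at time $n$. For each $y \in \cY(x)$ and each $x' \in \cX$, let $y'(y,x')$ denote the corresponding $y'$ witness. Combining the stopping inequality with the confidence bound above gives $(x - y - (x' - y'(y,x')))^{\top}\theta \geq 0$, hence $\min_{y' \in \cY(x')}(x'-y')^{\top}\theta \leq (x-y)^{\top}\theta$ for every $y$. Taking the minimum over $y \in \cY(x)$ yields $\min_{y'}(x'-y')^{\top}\theta \leq \min_y (x-y)^{\top}\theta$ for every $x' \in \cX$, so $x$ attains the largest robust value; by uniqueness $x = x^*$. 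The same chain performed directly with $\hat{\theta}_n$ (without invoking the good event) shows $\Pi(\hat{\theta}_n) = x^*$.

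For the sample-complexity bound, I combine the Kiefer-Wolfowitz equivalence theorem, which gives $\min_\lambda \max_{z \in \cZ} \|z\|_{A_\lambda^{-1}}^2 = d$ for the continuous G-optimal design on $\cZ$, with the $(1+\varepsilon)$-approximate rounding procedure of \citep{pukelsheim2006optimal} applied to $\lambda^G$. For $n \geq r(\varepsilon)$ this yields $\max_{z \in \cZ} \|z\|_{A_{\mathbf{z}_n}^{-1}}^2 \leq (1+\varepsilon) d / n$, so by the triangle inequality $\|z - z'\|_{A_{\mathbf{z}_n}^{-1}} \leq 2\sqrt{(1+\varepsilon)d/n}$ for every $z,z' \in \cZ$. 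To certify that \Cref{eq:11} fires, I take the witness $x = x^*$. For $x' = x^*$ one picks $y' = y$ trivially; for $x' \neq x^*$, set $y'^{\star} \in \argmin_{y' \in \cY(x')}(x'-y')^{\top}\theta$. Since $(x^*-y)^{\top}\theta \geq \min_{\tilde{y}\in\cY(x^*)}(x^*-\tilde{y})^{\top}\theta$ for every $y \in \cY(x^*)$, this choice enforces $\Delta(x^*, y, x', y'^{\star}) \geq \Delta_r(x^*, x') \geq \Delta_{\min} := \min_{x\in\cX\setminus\{x^*\}}\Delta_r(x^*,x)$. Transferring this lower bound on $\Delta$ to the empirical gap $\widehat{\Delta}_n$ via the good event, the stopping inequality holds once
\begin{equation*}
\Delta_{\min} \geq 4\sqrt{2(1+\varepsilon)d \log(|\cZ|^2/\delta)/n},
\end{equation*}
which rearranges to $n \geq 32(1+\varepsilon)d\log(|\cZ|^2/\delta)/\Delta_{\min}^2$.

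The main obstacle to make precise is the adversarial witness $y'^{\star}$: it depends on the unknown $\theta$, but it is only used as an existence certificate inside the stopping condition and never needs to be computed by the algorithm. The remaining subtleties, namely the union-bound scaling from $|\cZ|$ to $|\cZ|^2$, the trivial $x' = x$ branch of the stopping condition, and the absorption of the warm-up regime $n < r(\varepsilon)$ into the constant $32$, are straightforward algebraic bookkeeping.
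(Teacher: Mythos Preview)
Your proposal is correct and follows essentially the same route as the paper: apply the confidence bound of \Cref{prop2.2} to the difference set (hence the $|\cZ|^2$ in the log), use the triangle inequality to reduce $\|z-z'\|_{A^{-1}}$ to $2\max_z\|z\|_{A^{-1}}$, invoke Kiefer--Wolfowitz together with the $(1+\varepsilon)$-rounding to get $\max_z\|z\|_{A_{\mathbf{z}_n}^{-1}}^2 \le (1+\varepsilon)d/n$, and choose the adversarial witness $y'^\star\in\argmin_{y'}(x'-y')^\top\theta$ so that the relevant gap is bounded below by $\Delta_r(x^*,x')$. Your write-up is in fact slightly more complete than the paper's, since you explicitly verify correctness of the returned arm (the paper leaves this implicit in the confidence-set discussion preceding the theorem) and flag the $r(\varepsilon)$ warm-up; the core argument is identical.
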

We prove the sample complexity for $G$-allocation algorithm in Appendix \ref{static-proof}. The sample complexity of the proposed $G$-allocation strategy matches the worst-case optimal sample complexity for the RBAI problem as shown in \Cref{RBAI lb worst}. 




\begin{algorithm}[!t]
    \caption{Robust RAGE}\label{alg:robust_algo}
    \begin{algorithmic}[1]

    \REQUIRE Arms $\mathcal{X}$, adversary space $\{\mathcal{Y}(x)\}_{x \in \mathcal{X}}$, confidence $\delta \in(0,1)$, rounding approximation~factor $\varepsilon$
    \STATE \textbf{Initialization}: $t=1$; $\widehat{\mathcal{X}}_{1}=\mathcal{X} $
    \WHILE{$| \widehat{\mathcal{X}}_{t}| > 1$}
        \STATE   $\delta_t \leftarrow \frac{\delta}{t^2}$
        \STATE   $\lambda_t^* \leftarrow \arg \min _{\lambda } \max_{x \in \widehat{\mathcal{X}}_t}\max_{y \in \mathcal{Y}(x)}\max_{x' \in \widehat{\mathcal{X}}_t}\min_{y' \in \mathcal{Y}(x')} \|x-y-(x'-y')\|^2_{A^{-1}_{\lambda}} $
        \STATE   $\rho(\widehat{\mathcal{X}}_t) \leftarrow \min _{\lambda } \max_{x \in \widehat{\mathcal{X}}_t}\max_{y \in \mathcal{Y}(x)}\max_{x' \in \widehat{\mathcal{X}}_t}\min_{y' \in \mathcal{Y}(x')} \|x-y-(x'-y')\|^2_{A^{-1}_{\lambda}} $
        \STATE   $N_t \leftarrow \max \left\{\left\lceil 2^{(2t+1)} \rho(\widehat{\mathcal{X}}_t)(1+\varepsilon) \log \left(|\mathcal{Z}|^2 / \delta_t\right)\right\rceil, r(\varepsilon)\right\} $
        \STATE  $(z_1, \ldots, z_{N_t}) \leftarrow \textsc{Round}\left(\lambda_t^*, N_t\right) $
        \STATE  Pull arms $z_1, \ldots, z_{N_t}$ and obtain $r_1$, $\ldots, r_{N_t} $
        \STATE  Compute $\hat{\theta}_t=A_t^{-1} b_t$ using $A_t:=\sum_{j=1}^{N_t} z_j z_j^{\top}$ and $b_t:=\sum_{j=1}^{N_t} z_j r_j $
        \STATE  $\begin{aligned}
        \widehat{\mathcal{X}}_{t+1} \leftarrow \widehat{\mathcal{X}}_t \backslash\{x \in \widehat{\mathcal{X}}_t ~\text{s.t.}~ &\exists x^{\prime} \in \widehat{\mathcal{X}}_t, \forall y' \in \mathcal{Y}(x'), \exists y \in \mathcal{Y}(x): \\
        &\|x-y-(x'-y')\|_{A^{-1}_{\lambda}}\sqrt{2\log \left(|\mathcal{Z}|^2 / \delta_t\right)}<\left(x^{\prime}-y'-(x-y)\right)^{\top} \hat{\theta}_t\}
        \end{aligned}$
        \STATE  $t \leftarrow t+1$
    \ENDWHILE
    \STATE \textbf{Return:} arm $\Pi(\hat{\theta}_{t})$

    \end{algorithmic}
\end{algorithm}

\vspace{-2pt}
\subsection{Robust Adaptive Allocation Algorithm}
\vspace{-2pt}

Leveraging the knowledge of the lower bound provided in Theorem~\ref{thm1}, we design an algorithm called Robust RAGE with a sample complexity that matches the order of the lower bound. 
The algorithm is inspired by Randomized Adaptive Gap Elimination in~\cite{fiez2019sequential}, that is an arm elimination algorithm originally introduced for standard bandits. 
The core idea involves maintaining a set $\widehat{\cX}_t$, $t=1,2,\cdots$ of candidate best arms initialized at $\widehat{\cX}_1=\cX$.
This set is iteratively pruned using confidence intervals until the best robust arm is identified. In each iteration $t$, an arm allocation design $\lambda^{*}_t$ is (based on the lower bound) set as:
\begin{equation}
    \lambda^{*}_t = \arg \min _{\lambda} \max_{x \in \widehat{\mathcal{X}}_t}\max_{y \in \mathcal{Y}(x)}\max_{x' \in \widehat{\mathcal{X}}_t}\min_{y' \in \mathcal{Y}(x')} \|x-y-(x'-y')\|^2_{A^{-1}_{\lambda}}.
    \label{eq:eq14}
\end{equation}
Let $\rho(\widehat{\cX}_t)$ represent the minimum value of the right hand side with $\lambda_t^*$. 
We here limit the set of arms to the set of candidate best arms $\widehat{\cX}_t$ in iteration $t$. The allocation design $\lambda^*_t$ is then scaled and rounded properly to obtain the arm allocation in the iteration. In particular, let 
\begin{equation}\label{eq:Nt}
    N_t = \max \Big\{\big\lceil 2^{(2t+1)} \rho(\widehat{\mathcal{X}}_t)(1+\varepsilon) \log \left(|\mathcal{Z}|^2 / \delta_t\right)\big\rceil, r(\varepsilon)\Big\}. 
\end{equation}
In the expression above, $\varepsilon$ and $r(\varepsilon)$ serve as the parameters of the rounding procedure, and $\delta_t=\frac{\delta}{t^2}$, for some $\delta\in(0,1)$, determines the confidence interval at iteration $t$. We show that this choice of $\delta_t$, guarantees that the sample complexity holds with probability at least $1-\delta$. Moreover, the allocation is carefully scaled using a $2^{2t+1}$ factor, which balances the tradeoff between the sample complexity within each iteration and the total number of iterations.  

At the end of iteration $t$, the set of candidate best arms is updated by removing the arms which are unlikely to be the optimal arm. Specifically a $1-\delta_t$ confidence interval is used to remove all arms $x\in\widehat{\cX}_t$ for which there exists an arm $x'\in\widehat{\cX}_t$ with a higher robust mean than $x$ according to the confidence intervals: 
\begin{equation}
\begin{aligned}
\widehat{\mathcal{X}}_{t+1} =\widehat{\mathcal{X}}_t \backslash\{x \in \widehat{\mathcal{X}}_t ~\text{s.t.}~ &\exists x^{\prime} \in \widehat{\mathcal{X}}_t, \forall y' \in \mathcal{Y}(x'), \exists y \in \mathcal{Y}(x): \\
        &\|x-y-(x'-y')\|_{A^{-1}_{\lambda}}\sqrt{2\log \left(|\mathcal{Z}|^2 / \delta_t\right)}<\left(x^{\prime}-y'-(x-y)\right)^{\top} \hat{\theta}_t\}.
    \end{aligned}
\end{equation}
A detailed pseudocode is provided in Algorithm~\ref{alg:robust_algo}. 
Next, we provide an upper bound on the sample complexity of the algorithm.


\begin{restatable}{theorem}{robustsamplecomplexity} \label{thm:robust_rage}
    Assume Algorithm~\ref{alg:robust_algo} is implemented with an $\varepsilon$-approximate rounding strategy. Then, after $N$ samples the algorithm returns an optimal arm with probability at least $1-\delta$, and we have:\begin{align}
    N 
    \leq 128 \log \left(\frac{|\mathcal{Z}|^2 \bar{t}^2}{\delta^2}\right)(1+\varepsilon) \bar{t} H_{\mathrm{R}}+\bar{t}(1+r(\varepsilon)),
    \end{align}
    where $\bar{t}=\left\lceil \log _2 \left(4/\min_{x \in \cX \setminus \lbrace x^* \rbrace}\Delta_r(x^*,x)\right)\right\rceil$ and $H_{\mathrm{R}}$ is from~\Cref{def:sample complexity}.
\end{restatable}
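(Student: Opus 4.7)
The strategy is an elimination-style argument on a good event. Define $\cE$ as the intersection over iterations $t\ge 1$ of the event that the confidence bound derived from Lemma~\ref{prop2.2} holds for the batch of pulls of iteration $t$ with confidence $\delta_t=\delta/t^{2}$. Since $\lambda_t^{*}$ depends only on $\widehat{\cX}_t$, which is measurable with respect to the history up to iteration $t-1$, the batch of iteration $t$ is non-adaptive given that history and Lemma~\ref{prop2.2} applies; a union bound over the at most $|\cZ|^{2}$ relevant differences $v=x-y-(x'-y')$ produces the $\sqrt{2\log(|\cZ|^{2}/\delta_t)}$ threshold and yields per-iteration confidence $1-\delta_t$, and a further union bound over $t$ using $\sum_t 1/t^{2}\le 2$ gives $\P(\cE)\ge 1-\delta$ after absorbing numerical constants. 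All remaining arguments are carried out on $\cE$.

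\textbf{Correctness via induction.} We show by induction on $t$ that $x^{*}\in\widehat{\cX}_t$ and that every $x\in\widehat{\cX}_{t+1}\setminus\{x^{*}\}$ has $\Delta_r(x^{*},x)\le 2^{-t+1}$. If $x^{*}$ were eliminated at iteration $t$, the elimination rule combined with the confidence bound in $\cE$ yields an $x'\in\widehat{\cX}_t$ with $(x'-y')^{\top}\theta>(x^{*}-y)^{\top}\theta$ for every $y'\in\cY(x')$ and some $y\in\cY(x^{*})$; taking $\min_{y'}$ and using the existential $y$ gives $\min_{y'}(x'-y')^{\top}\theta>\min_{y}(x^{*}-y)^{\top}\theta$, contradicting the definition of $x^{*}$. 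For the inductive step, fix $x\in\widehat{\cX}_t$ with $\Delta_r(x^{*},x)>2^{-t+1}$ and apply the elimination rule with $x'=x^{*}$ and $y=\argmin_{y\in\cY(x)}(x-y)^{\top}\theta$; for every $y'\in\cY(x^{*})$, one has $(x^{*}-y'-(x-y))^{\top}\theta\ge\Delta_r(x^{*},x)$. The choice of $N_t$, together with the rounding property $A_t\succeq N_t A_{\lambda_t^{*}}/(1+\varepsilon)$ and the definition of $\rho(\widehat{\cX}_t)$, forces $\|v\|_{A_t^{-1}}^{2}\le 1/\big(2^{2t+1}\log(|\cZ|^{2}/\delta_t)\big)$ for every $v=x-y-(x'-y')$ with $x,x'\in\widehat{\cX}_t$, so the confidence half-width is at most $2^{-t}$; since $2\cdot 2^{-t}<\Delta_r(x^{*},x)$, the elimination fires. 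With $\bar t=\lceil\log_2(4/\Delta_{\min})\rceil$ and $\Delta_{\min}=\min_{x\neq x^{*}}\Delta_r(x^{*},x)$, the inequality $2^{-\bar t+1}<\Delta_{\min}$ forces $\widehat{\cX}_{\bar t+1}=\{x^{*}\}$ and the algorithm terminates.

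\textbf{Sample complexity and main obstacle.} The remaining step is the \emph{design-coupling inequality} $2^{2t+1}\rho(\widehat{\cX}_t)\le C H_{\mathrm{R}}$ for a numerical constant $C$, which is the crux of the proof. Plug the oracle allocation $\lambda^{\star}$ from~\eqref{eq:oracle} into $\rho(\widehat{\cX}_t)$ and apply the triangle inequality $\|x-y-(x'-y')\|^{2}\le 2\|x-y-(x^{*}-y_a)\|^{2}+2\|x^{*}-y_a-(x'-y')\|^{2}$ with anchor $y_a=\argmin_{y\in\cY(x^{*})}(x^{*}-y)^{\top}\theta$. For the second term, the definition of $H_{\mathrm{R}}$ bounds $\min_{y'}\|x^{*}-y_a-(x'-y')\|_{A_{\lambda^{\star}}^{-1}}^{2}\le H_{\mathrm{R}}\cdot\Delta(x^{*},y_a,x',y'^{*})^{2}$ at the $\min$-achiever $y'^{*}$, and the adversarial choice of $y_a$ yields $\Delta(x^{*},y_a,x',y'^{*})\le\Delta_r(x^{*},x')\le 2^{-t+2}$ by the induction hypothesis, producing a bound of order $H_{\mathrm{R}}\cdot 2^{-2t}$. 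A symmetric argument handles the first term, using that $x\in\widehat{\cX}_t$ also has small robust gap to $x^{*}$. The delicate point is that $H_{\mathrm{R}}$ naturally provides a $\min_{y}$-type bound on the numerator while the definition of $\rho$ carries a $\max_{y}$ over $y\in\cY(x)$; one handles this by coupling the anchor $y_a=y_a(y)$ to $y$ so that the first term stays controlled uniformly in $y$, exploiting the small robust gap of $x$. Once the coupling inequality is in hand, each $N_t$ satisfies $N_t\le C H_{\mathrm{R}}(1+\varepsilon)\log(|\cZ|^{2}t^{2}/\delta)+1+r(\varepsilon)$; summing over $t=1,\ldots,\bar t$ and upper-bounding $\log(|\cZ|^{2}t^{2}/\delta)\le\log(|\cZ|^{2}\bar t^{2}/\delta^{2})$ gives the claimed complexity.
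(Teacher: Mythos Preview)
Your plan matches the paper's proof in structure: the good-event construction, the two inductive claims ($x^*$ is never eliminated; arms with robust gap exceeding $2^{-(t-1)}$ are eliminated in round $t$---these are the paper's Claims~1 and~2), and the termination bound $T\le\bar t$ are all the same. A minor difference is that you use a direct union bound $\sum_t\delta/t^2$ for the good event, whereas the paper chains conditional probabilities via $\prod_t(1-\delta_t)\ge 1-\delta$ (its Lemma~3).

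The genuine gap is in your design-coupling step, precisely at the ``delicate point'' you flag. The coupling $y_a=y_a(y)$ does not work as described: if $y_a$ is chosen to control the first term $\|x-y-(x^*-y_a)\|^2$, you forfeit the identity $(x^*-y_a)^\top\theta=\min_y(x^*-y)^\top\theta$, so the inequality $\Delta(x^*,y_a,x',y'^{*})\le\Delta_r(x^*,x')$ that you rely on for the second term fails; conversely, fixing $y_a$ to be the adversarial minimizer leaves the first term uncontrolled for a general $y\in\cY(x)$, since $H_{\mathrm R}$ only bounds the numerator at the ratio-minimizing element of $\cY(x)$, not at every $y$. The paper's route is different here: it does \emph{not} plug in $\lambda^\star$ or use the per-round gap $2^{-t+2}$. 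Instead (step~(a)) it writes $v=(x^*-\tilde y-(x'-y'))-(x^*-\tilde y-(x-y))$ with a free $\tilde y\in\cY(x^*)$ and passes via the triangle inequality to $4\max_{\tilde y\in\cY(x^*)}\max_{x'\in\widehat{\cX}_t}\min_{y'}\|x^*-\tilde y-(x'-y')\|^2_{A_\lambda^{-1}}$, keeping $\min_\lambda$ outside throughout; then (step~(b)) it bounds $\sum_{t\le\bar t}(2^{t-2})^2$ by $\bar t\,(2^{\bar t-2})^2$; and finally (step~(c)) it trades $(2^{-(\bar t-2)})^2$ for $\max\{\Delta,0\}^2$ using the definition of $\bar t$, which yields $H_{\mathrm R}$ directly with constant $128$. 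Following that three-step reduction, rather than a per-round coupling, is the argument the paper gives.
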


Compared to the lower bound presented in Theorem~\ref{thm1}, the sample complexity of Robust RAGE is instance optimal up to absolute constants. 
The proof employs \Cref{prop2.2} to guarantee the efficient removal of sub-optimal arms without eliminating the optimal robust arm.  We show that the sample complexity during each round remains within constant factors of the lower bound. Moreover, we show that the total number of rounds is bounded by a constant only depending on $\min_{x \in \cX \setminus \lbrace x^* \rbrace}\Delta_r(x^*,x)$. A detailed proof is provided in~\Cref{sec:adapt proof}.\looseness=-1

\looseness=-1

\vspace{-1ex}
\section{Experiments}
\vspace{-1ex}
\label{sec:experiments}
In this section, we present simulations for the linear robust bandit problem. We compare our proposed algorithm with the oracle (Oracle) strategy that knows $\theta$ and samples according to $\lambda^*$ (\Cref{eq:oracle}). We run both the static allocation strategy (Robust Static; \Cref{eq:G-all}) and Robust RAGE (Algorithm~\ref{alg:robust_algo}) until the best-robust arm is found and report on their sample complexity. We run every algorithm at a confidence level of $\delta = 0.05$. Similarly to \citet{fiez2019sequential}, to compute the allocation strategy of Robust RAGE, we used the Frank-Wolfe algorithm to find robust $\lambda_t$, followed by the rounding procedure. The noise in the observations was generated from a standard normal distribution.
\looseness=-1

\vspace{-1ex}
\subsection{Synthetic experiments}
\vspace{-1ex}
\textbf{Experiment 1: Irrelevant dimensions.} 
We first perform a synthetic experiment with a number of irrelevant dimensions to learn the robust best arm. This experiment is similar to the one in \citet{soare2015sequential,lindner2022interactively}, that have been widely performed in the linear bandit setting. We design a problem with $d+1$ arms, each with $d$ dimensional features. Each arm has associated $n_y = 5$ adversarial actions. For arms $x_1,x_2,...,x_{d}$, we have $x_i = \mathbf{e}_i$ and $x_{d+1} = \mathbf{e}_{1} + \sin(0.01)\mathbf{e}_{2}$. Here, $\mathbf{e}_i$ denotes the $i$-th unit vector. The truly used reward parameter is $\theta = 2\mathbf{e}_1$. The adversarial action space $\mathcal{Y}(x_i)$ for each arm $x_i$ (for $i=1,\dots,d$) is the same and contains $y_j= 0.01j\mathbf{e}_i$ for $j=1,..., n_y$. However, for arm $x_{d+1}$, we use $y_j(x_{d+1})= 0.01j\mathbf{e}_{1} + 1-\varepsilon$. We set $\varepsilon = \cos(0.01)$ and the best robust arm $x^*=x_1$ since $(x_1 - y_{5}(x_1))^\top\theta = 1.90 > (x_{d+1} - y_{5}(x_{d+1}))^\top\theta = 2\varepsilon - 0.1 \approx 1.8999$. \looseness=-1

\textbf{Experiment 2: Unit Sphere.} The second experiment we perform is the unit sphere experiment \cite{lindner2022interactively,tao2018best}. We sample arms $x_1,...,x_n$ uniformly at random from the surface of a $d$-dimensional unit sphere. In the standard (non-robust) unit sphere experiment, the two arms $x_i,x_j$ with the smallest gap are used to construct $\theta = x_i + 0.01(x_j-x_i)$. This can make the experiment more challenging, as the learner must be able to distinguish between $x_i$ and $x_j$. 

We modify this setup to fit our robust setting. Since we have adversarial action $y$ to consider, the performance of arm $x$ depends on the worst-performing $y$. To construct $\theta$ (based on $x$ and $y$), we first sample $x_1,...,x_n$ uniformly from a $10$-dimensional unit sphere, select the two closest arms $x_i$ and $x_j$, and set $\theta=x_i$. We let the arm $x_i$ and $x_j$ contain only one adversarial action $y_1(x_i) = - \alpha x_i$ and $y_1(x_j) = - \alpha x_j$ to maintain the complexity of the problem. For other arms, we sample $n_y = 5$ adversarial actions $y$ from a unit sphere and multiply them by a factor $\alpha=0.05$. Under this setup, we ensure that the best robust arm $x^*$ is $x_i$. This follows by noting that for every $x$, we have  $\min_{y \in \mathcal{Y}(x)}(x-y)^\top\theta \leq \| x-y  \|_2 \|\theta\|_2 \leq 1+\alpha = (x_i + \alpha x_i)^\top\theta$. \looseness=-1

\textbf{Results.} \Cref{fig:1} illustrates the results of our experiments on synthetic robust best-arm instances. Although all algorithms can identify the robust best-arm, their sample efficiency varies significantly. As anticipated, the oracle strategy exhibits the highest sample efficiency, while Robust-RAGE comes close to oracle performance (\Cref{fig:1} (b)) and outperforms Robust Static allocation in all scenarios. For example, when we increase the number of irrelevant dimensions in the first experiment, Robust Static allocation requires more samples to identify the relevant dimension. In contrast, Robust-RAGE quickly focuses on the relevant dimension and does not require additional samples when irrelevant dimensions are added to the problem. \looseness=-1

\begin{figure*}[t!]
    \centering  
    \subfigure[Irrelevant Dimensions]{ 
    \begin{minipage}{0.45\textwidth}
    \centering    
    \includegraphics[scale=0.45]{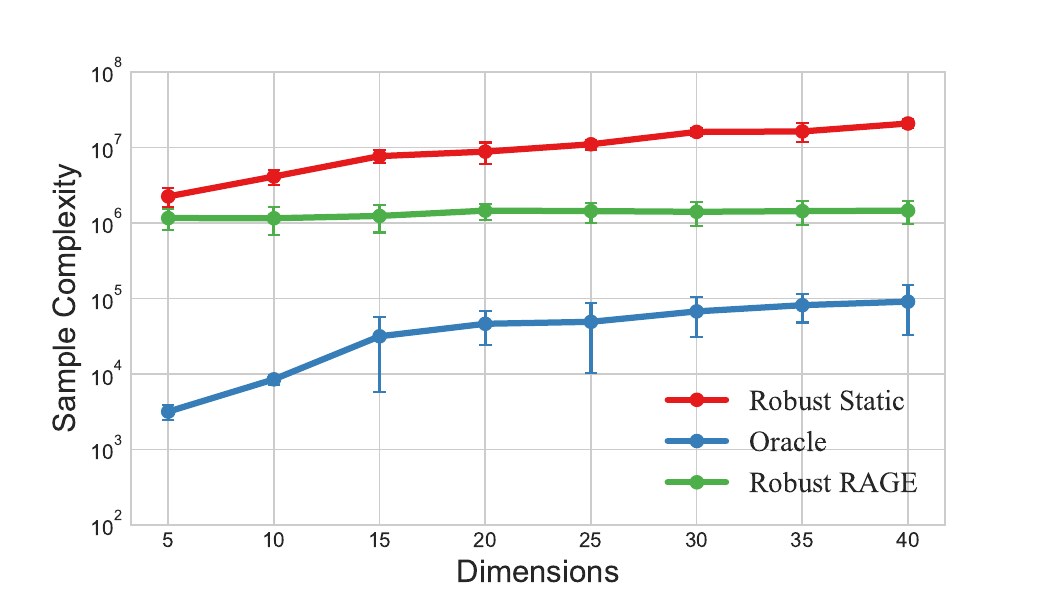} 
    \end{minipage}
    }
    \subfigure[Unit Sphere]{ 
    \begin{minipage}{0.45\textwidth}
    \centering   
        \includegraphics[scale=0.45]{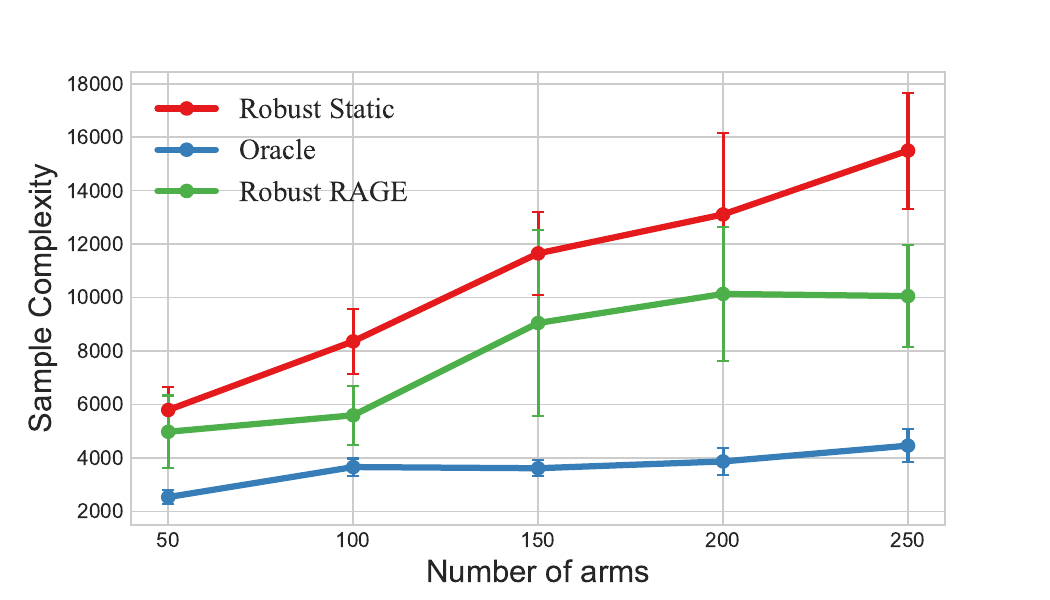}
        
    \end{minipage}
    }
    \caption{(Synthetic Experiments) The sample complexity of the proposed algorithms required to identify the best-robust arm. The results shown are averaged over 20 runs.}    
    \label{fig:1}    
    
\end{figure*}
\begin{table*}[t!]
    \centering 
    \begin{tabular}{ccccc}
    \hline
    \multirow{2}{*}{Method} & Risk Index                     & Hypoglycemia               & Hyperglycemia                & Euglycemia                   \\
                            & $\downarrow$               & (\%) $\downarrow$          & (\%) $\downarrow$            & (\%) $\uparrow$              \\ \hline
    Calculator              & 4.07 $\pm$ 3.49          & 3.48 $\pm$ 7.33          & 11.23 $\pm$ 13.95           & 85.29 $\pm$ 14.30          \\
    {StableOPT Cal.}              & 4.01 $\pm$ 3.43          & 1.69 $\pm$ 4.27         & 12.10 $\pm$ 12.41         & 86.20 $\pm$ 12.97        \\
    RBAI-tuned Calculator   & \textbf{3.77 $\pm$ 3.27} & \textbf{1.66 $\pm$ 4.02} & \textbf{11.18 $\pm$ 12.08} & \textbf{87.17 $\pm$ 12.78} \\ \hline
    \end{tabular}
    \caption{ Bolus insulin dose selection problem: The performance of the proposed calculator improves when instantiated with features of the discovered best-robust solutions.}
    \label{table:1}  
\end{table*}


\begin{table*}[t!]
    \centering 
    \begin{tabular}{ccccc}
        \hline
        \multirow{2}{*}{Method} & Risk                       & Hypoglycemia               & Hyperglycemia                & Euglycemia                   \\
                                & $\downarrow$               & (\%) $\downarrow$          & (\%) $\downarrow$            & (\%) $\uparrow$              \\ \hline
        Calculator              & 7.25 $\pm$ 4.24          & 17.86 $\pm$ 21.35        & 12.62 $\pm$ 7.74           & 69.52 $\pm$ 20.58          \\
        {StableOPT Cal.}              & 3.77 $\pm$ 3.13          & 2.64 $\pm$ 7.32          & 10.82 $\pm$ 11.96          & 86.52 $\pm$ 13.90          \\
        RBAI-tuned Calculator   & \textbf{3.60 $\pm$ 2.98} & \textbf{2.57 $\pm$ 7.07} & \textbf{10.62 $\pm$ 12.22} & \textbf{86.81 $\pm$ 14.10} \\ \hline
\end{tabular}
    \caption{Worst-case bolus insulin dose selection problem: The proposed calculator maintains the best performance across all the evaluated metrics under the worst-case scenario evaluation.}
    \label{table:2}
\end{table*}

\vspace{-1ex}
\subsection{Robust Best Dose Identification}
\vspace{-1ex}
Basal-bolus therapy is a treatment strategy commonly used in the management of diabetes, particularly type one diabetes. It involves long-acting basal insulin and after-meal rapid-acting bolus insulin. In this experiment, we aim to learn a \emph{bolus recommendation} model for each individual patient taking into account their characteristics. There are two important factors in bolus insulin calculation. The \emph{carbohydrate factor (CarbF)} measures how far an individual's blood glucose will rise per unit of carbohydrates, and the \emph{correction factor (CorrF)} quantifies how far an individual’s blood glucose will fall per unit of insulin. In order to ensure a safe blood glucose level, it is crucial to have accurate information regarding two key factors \cite{walsh2011guidelines}. The correction factor should be appropriately adjusted considering the patient's average glucose readings, while the carbohydrate factor is associated with the patient's weight. However, estimating both factors can be prone to inaccuracies. To provide reliable guidance on determining the appropriate bolus insulin dosage for patients consuming low-carb and high-carb meals, we aim to identify these factors in a robust manner.\looseness=-1

\textbf{Simulator.} We use UVA/Padova model \citep{kovatchev2009silico} to simulate the glucoregulatory system and an open-source version of the simulator \citep{Xue2018Simglucose}. It is a widely used simulator in many advanced bolus recommendation algorithms \citep{lee2020toward, zhu2020insulin, demirel2021safe}. 
By inputting meal events (meal amount and meal time) along with basal and bolus insulin amounts, the simulator can simulate the meal intakes and insulin infusion events and generate a comprehensive simulation of the patient's blood glucose level history throughout the specified time period. The simulator contains $30$ virtual patients across ages (10 children, 10 adolescents, and 10 adults).

\textbf{Experiment Setup.} We calculate the lowest negative Magni risk score \citep{magni2007model} across the patient's simulated blood glucose (BG) history as the reward for each simulation. Magni risk function provides a non-linear mapping between BG and risk value, where low BG levels have significantly faster growth in risk compared to high BG levels. We consider both low-carb (lc) and high-carb (hc) meal scenarios and amounts of these meals are calculated based on the patient's BMR rate (more details can be found in the \Cref{sec:realexp}).
 Since the \emph{total daily insulin dose} (TDI) is used to calculate CarbF and CorrF, we include it in our arm features. We formulate our experiment as an RBAI problem considering patient-specific parameter $\theta$ with arms $x_i= [ \textup{lc}, \textup{hc}, \textup{TDI}_i, \textup{CarbF}_i, \textup{CorrF}_i ]$ and their corresponding adversarial actions $y_{j,x_i} = [0,0, 0, \varepsilon ^{\textup{CarbF}_i}_j, \varepsilon ^{\textup{CorrF}_i}_j]$.
Here, $\textup{TDI}_i$ is sampled from original TDI ($\textup{TDI}_i \in \left [ 0.75,1.25 \right ] * \textup{TDI}_{\textup{original}}$) provided by simulator and $\textup{CarbF}_i$ and $\textup{CorrF}_i$ are calculated according to the constants suggested in \citet{fox2020deep}. Since these factors are only approximately estimated, we consider the worst-case and construct adversarial actions for the two factors by sampling them uniformly within suggested intervals of clinical trials \citep{walsh2011guidelines} so that $\textup{CarbF}_i - \varepsilon ^{\textup{CarbF}_i}_j \in [0.95,1.05] * 500 /  \textup{TDI}_i$ and $\textup{CorrF}_i - \varepsilon ^{\textup{CorrF}_i}_j \in \left [ 1500/  \textup{TDI}_i, 2200/  \textup{TDI}_i \right ]$. We input our arm features and their adversarial actions into the bolus calculator and the simulator injects the dosage amount from the calculator whenever a meal event occurs. We perform simulations for low-carb and high-carb scenarios, and the reward $r_t$ for RBAI problem is the sum of two rewards.

\textbf{Performance Metrics.} To evaluate the performance of the proposed Robust RAGE tuned calculator, we select four standard glycemic metrics \citep{maahs2016outcome}. Two primary goals are: keeping blood glucose levels within a target range and minimising the occurrence of hypoglycemia. Our metrics include percentage time in the glucose target range of $[70,180] \mathrm{mg} / \mathrm{dL}$ (Euglycemia), percentage time below $70 \mathrm{mg} / \mathrm{dL}$ (Hypoglycemia), percentage time above $180 \mathrm{mg} / \mathrm{dL}$ (Hyperglycemia) and a sum of low and high blood glycemic index (Risk Index). All metrics are evaluated based on 20 independent 5-day simulations with low-carb and high-carb meal events for each patient. Results are expressed by mean values and standard deviations.

\textbf{Results.} We performed the experiment by simulating 15 arms, each containing 5 adversarial actions. The baseline calculator, referred to as "Calculator", utilizes TDI, CarbF, and CorrF, from the tuned simulator configuration in \cite{fox2020deep}. Additionally, we implement another robust algorithm StableOPT \citep{bogunovic2018adversarially} with a linear kernel for comparison. We compare both baselines to the "Robust RAGE tuned Calculator", which determines these parameters based on the features of the found best-robust configuration. \looseness=-1

As illustrated in \Cref{table:1}, the metrics of RBAI-tuned calculator demonstrate superior performance compared to the two baselines. Notably, there is a significant decrease in the percentage of time spent in Hypoglycemia events, accompanied by an improvement in the amount of time patients spend within a safe BG range. The experiment utilizes the best-robust configuration identified for tuning the calculator.\looseness=-1

In \Cref{table:2}, we conduct a \emph{worst-case} evaluation, where we perturb both the default configuration of the Calculator and the discovered robust configuration obtained through Robust RAGE and StableOPT. Our objective is to minimize the reward within the set of considered perturbations. We then compare their robust performance using the same metrics as before. This experiment highlights the importance of selecting a robust arm. Comparing the results to \Cref{table:1}, it is evident that the baseline performance (Calculator) experiences a significant unsafe drop. This indicates that the normal Calculator's configuration is highly non-robust. Conversely, our RBAI-tuned calculator exhibits only a negligible decrease in performance while maintaining a satisfactory percentage of time in Euglycemia. Moreover, our findings show that the worst-case performance of StableOPT falls between that of the Calculator and Robust RAGE. This suggests that StableOPT is unable to identify the most robust arm with the same sample size as Robust RAGE. Robust RAGE, with its theoretical guarantees, outperforms StableOPT by identifying the best robust arm.
\vspace{-1ex}
\section{Conclusions} 
\vspace{-1ex}
Our work aimed to develop robust algorithms for best-arm identification with linear rewards. We presented novel methods based on static-G allocation and adaptive gap elimination sampling. These algorithms demonstrate effectiveness in identifying the robust best arm and achieving optimal sample complexity rates. Through our experiments, we validated the utility of our approach in determining the suitable bolus insulin amount for patients, while considering inaccuracies. In terms of future research, we propose the exploration of algorithms incorporating adaptive confidence bounds \cite{abbasi2011improved}. We believe that our work makes a substantial contribution towards bridging the sim-to-real gap when working with simulators.\looseness=-1 

\bibliographystyle{plainnat}
\bibliography{main}

\newpage
\onecolumn
\appendix

\section{Lower Bound Proofs}

\subsection{Lower Bound Proof} \label{lower bound proof}
\label{app:lower_bound}

    

\lowerbound*

\begin{proof}
Without the loss of generality, we use $x_1$ and $x_2$ to represent the best-robust arms for the environment with parameter $\theta$ and $\theta'$ respectively. We consider the following inequalities similar to the proof in \citet{kaufmann2016complexity}. Define $\mathcal{A}$ a $\delta$-PAC algorithm and let $A$ be the event that algorithm $\mathcal{A}$ recommends $x_1$ as the best robust arm. Here we use the binary relative entropy as the distance function $d(x,y) = x\log(x/y)+(1-x)\log((1-x)/(1-y))$ and we have the following statements hold true:

\begin{equation}
    \mathbb{P}_\nu\left [ A \right ] \geq 1-\delta ,
    \label{eq4.2}
\end{equation}
\begin{equation}
    \mathbb{P}_{\nu'}\left [ A \right ] \leq \delta,
    \label{eq4.3}
\end{equation}
\begin{equation}
d\left(\mathbb{P}_{\nu}[A], \mathbb{P}_{\nu^{\prime}}[A]\right) \geq \log (1 / 2 \delta).
\label{eq4.4}
\end{equation}

\Cref{eq4.2} and \Cref{eq4.3} follow directly from the definition of PAC algorithms and the choice of event $A$. We thus have that the probability of event $A$ in environment $\nu$ (denoted $\mathbb{P}_\nu\left [ A \right ]$) is higher than $1-\delta$, where $x^*_{\nu}=x_1$ is the true best robust arm of the environment. Also, the probability of event $A$ in environment $\nu'$ (denoted $\mathbb{P}_{\nu'}\left [ A \right ]$) is smaller than $\delta$ since the best robust arm is different (i.e. $x^*_{\nu'}=x_2 \neq x_1$). We also introduce the following helping lemma. \\

\begin{proposition} [Lemma 1 in \cite{kaufmann2016complexity}]
    Let $N_{z_i}(t)$ denote the number of draws of pairs $z_i=x_i-y_i$ up to round $t$ and suppose $t$ is the stopping time of an algorithm $\mathcal{A}$. Also, let $\nu$ and $\nu'$ be two bandit models and $A$ an event such that $0<\mathbb{P}_\nu\left [ A \right ]<1$. Then,
    
    \begin{equation}
        \sum_{i=1}^{K} \mathbb{E}_{\nu}\left[N_{z_{i}}(t)\right] K L\left(\nu, \nu^{\prime}\right) \geq d\left(\mathbb{P}_{\nu}[A], \mathbb{P}_{\nu^{\prime}}[A]\right).
    \end{equation}
    \label{pac condition}
\end{proposition}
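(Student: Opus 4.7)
The plan is to establish this transportation lemma via a change-of-measure argument combined with the data-processing inequality for Kullback--Leibler divergence. The central object is the log-likelihood ratio of the full observation history under the two environments $\nu$ and $\nu'$, evaluated at the stopping time.

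First, I would set up the measure-theoretic framework. Let $\mathcal{F}_s$ denote the $\sigma$-algebra generated by the sequence of selected pairs $(z_1, r_1), \ldots, (z_s, r_s)$ up to step $s$, and let $P_\nu^s$ and $P_{\nu'}^s$ denote the distributions on $\mathcal{F}_s$ induced by running the algorithm $\mathcal{A}$ in environments $\nu$ and $\nu'$, respectively. Since $\mathcal{A}$'s sampling rule is the same under both environments and only the per-arm reward laws $\nu_i$ and $\nu'_i$ differ, the Radon--Nikodym derivative factorises as a product indexed by the pulls, and the log-likelihood ratio takes the form $L_s = \sum_{k \le s} \log\frac{d\nu_{i_k}}{d\nu'_{i_k}}(r_k)$, where $i_k$ denotes the arm index chosen at step $k$.

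Next, I would evaluate $\mathbb{E}_\nu[L_t]$ at the stopping time $t$. Conditioning on the past and using that $i_k$ is $\mathcal{F}_{k-1}$-measurable, each summand contributes $KL(\nu_{i_k}, \nu'_{i_k})$ in expectation. A Wald-type optional stopping identity applied to the martingale $M_s = L_s - \sum_{k \le s} KL(\nu_{i_k}, \nu'_{i_k})$ then gives
\begin{equation}
    \mathbb{E}_\nu[L_t] \;=\; \mathbb{E}_\nu\Bigl[\sum_{k=1}^{t} KL(\nu_{i_k}, \nu'_{i_k})\Bigr] \;=\; \sum_{i=1}^{K} \mathbb{E}_\nu[N_{z_i}(t)]\, KL(\nu_i, \nu'_i),
\end{equation}
where the final equality simply re-indexes the sum by arm. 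Combined with the identity $\mathbb{E}_\nu[L_t] = KL(P_\nu^t, P_{\nu'}^t)$, this expresses the total divergence as a weighted sum of per-arm divergences.

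Finally, I would apply the data-processing inequality to the event $A \in \mathcal{F}_t$: the map $\omega \mapsto \mathbf{1}_A(\omega)$ is measurable, and KL divergence is non-increasing under such mappings, yielding
\begin{equation}
    KL(P_\nu^t, P_{\nu'}^t) \;\ge\; KL\bigl(\mathrm{Bern}(\mathbb{P}_\nu[A]), \mathrm{Bern}(\mathbb{P}_{\nu'}[A])\bigr) \;=\; d(\mathbb{P}_\nu[A], \mathbb{P}_{\nu'}[A]).
\end{equation}
Chaining this with the previous display delivers the claim. The main obstacle is the careful handling of the random stopping time $t$: one must verify the integrability hypotheses of optional stopping, which I would do either by truncating at $t \wedge n$, establishing the identity there, and then passing to the limit via monotone convergence on $N_{z_i}(t \wedge n)$, or by appealing to the fact that $t$ is almost surely finite with finite expectation under the PAC regime.
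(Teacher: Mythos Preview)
The paper does not supply its own proof of this proposition: it is stated as a helping lemma and attributed directly to \cite{kaufmann2016complexity} (Lemma~1), then used as a black box in the lower-bound argument. Your sketch is a faithful reconstruction of the standard proof of that transportation lemma---log-likelihood ratio, Wald/optional-stopping identity to decompose $\mathbb{E}_\nu[L_t]$ by arm, then data-processing to pass to the binary divergence $d(\cdot,\cdot)$---and is correct as written, with the truncation-and-monotone-convergence step being the right way to handle the random stopping time.
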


Now we introduce $ \tilde{\varepsilon} = \theta'-\theta$ and assume a static algorithm $\mathcal{A}$ that performs the fixed sequence of pulls $\mathbf{z}_t$. Let $\mathbf{z}_t(\mathcal{A}) =(z_1,...,z_t)$ be the sequence of $t$ arms and adversarial actions selected by the strategy $\mathcal{A}$ and let $(r_1,...,r_t)$ be the corresponding observed rewards, where $r_i=(x_i-y_i)^\top \theta + \eta _i$ with $\eta_i \sim \mathcal{N}(0,1)$. Then, we can introduce the log-likelihood ratio of the observations up to time $t$ under algorithm $\mathcal{A}$:

\begin{align}
L_{t}(r_{1}, \ldots, r_{t})&=\log \left(\prod_{s=1}^{t} \frac{\mathbb{P}_{\nu}\left(r_{s} \mid x_{s},y_s\right)}{\mathbb{P}_{\nu^{\prime}}\left(r_{s} \mid x_{s},y_s\right)}\right) \\
&=\sum_{s=1}^{t} \log \left(\frac{\mathbb{P}\left((x_{s}-y_s)^{\top} \theta+\eta_{s} \mid x_{s},y_s\right)}{\mathbb{P}\left((x_{s}-y_s)^{\top} \theta^{\prime}+\eta_{s}^{\prime} \mid x_{s},y_s\right)}\right) \\
&=\sum_{s=1}^{t} \log \left(\frac{\mathbb{P}\left(\eta_{s}\right)}{\mathbb{P}\left(\eta_{s}^{\prime}\right)}\right) \\
&=\sum_{s=1}^{t} \log \left(\frac{\exp \left(-\eta_{s}^{2} / 2\right)}{\exp \left(-\eta_{s}^{\prime 2} / 2\right)}\right) \quad\left(\text { since both } \eta \text { and } \eta^{\prime} \sim \mathcal{N}(0,1)\right) \\
&=\sum_{s=1}^{t} \log \left(\exp \left(-\eta_{s}^{2} / 2+\eta_{s}^{\prime 2} / 2\right)\right)=\sum_{s=1}^{t} \frac{\left(r_{s}-(x_{s}-y_s)^{\top} \theta^{\prime}\right)^{2}-\left(r_{s}-(x_{s}-y_s)^{\top} \theta\right)^{2}}{2} \\
&=\sum_{s=1}^{t} \frac{r_{s}^{2}-2 r_{s} (x_{s}-y_s)^{\top} \theta^{\prime}+\left((x_{s}-y_s)^{\top} \theta^{\prime}\right)^{2}-r_{s}^{2}+2 r_{s} (x_{s}-y_s)^{\top} \theta-\left((x_{s}-y_s)^{\top} \theta\right)^{2}}{2} \\
&=\sum_{s=1}^{t} \frac{2 r_{s} (x_{s}-y_s)^{\top}\left(\theta-\theta^{\prime}\right)+\left((x_{s}-y_s)^{\top} \theta^{\prime}-(x_{s}-y_s)^{\top} \theta\right)\left((x_{s}-y_s)^{\top} (\theta^{\prime}+\theta)\right)}{2} \\
&=\sum_{s=1}^{t}\left((x_{s}-y_s)^{\top} \tilde{\varepsilon}\right) \frac{-2 r_{s}+2 (x_{s}-y_s)^{\top} \theta+(x_{s}-y_s)^{\top} \tilde{\varepsilon}}{2} \\
&=\sum_{s=1}^{t}\left((x_{s}-y_s)^{\top} \tilde{\varepsilon}\right)\left(\frac{-2 \eta_{s}+(x_{s}-y_s)^{\top} \tilde{\varepsilon}}{2}\right).
\end{align}

After a simple rewriting, we obtain
\begin{equation}
\begin{aligned}
\mathbb{E}_{\nu}\left[L_{t}\right] &=\mathbb{E}_{\nu}\left[\sum_{s=1}^{t}\left((x_{s}-y_s)^{\top} \tilde{\varepsilon}\right)\left(\frac{-2 \eta_{s}+(x_{s}-y_s)^{\top} \tilde{\varepsilon}}{2}\right)\right] \\
&=\frac{1}{2} \mathbb{E}_{\nu}\left[\sum_{s=1}^{t}\left((x_{s}-y_s)^{\top} \tilde{\varepsilon}\right)^{2}\right] - \underbrace{\mathbb{E}_{\nu}[\eta_{s}]}_{=0} \\
&=\frac{1}{2} \mathbb{E}\left[\sum_{s=1}^{t} \tilde{\varepsilon}^{\top} (x_{s}-y_s) (x_{s}-y_s)^{\top} \tilde{\varepsilon}\right] \\
&=\frac{1}{2} \mathbb{E}\left[\tilde{\varepsilon}^{\top} A_{\mathbf{z}_{t}} \tilde{\varepsilon}\right],
\end{aligned}
\end{equation}

where $A_{\mathbf{z}_{t}}=\sum_{s=1}^{t} z_s z_s^{\top} = \sum_{s=1}^{t} (x_{s}-y_s) (x_{s}-y_s)^{\top}$ is the design matrix corresponding to the fixed sequence $\mathbf{z}_t$.

\paragraph{Soft allocation.} We denote the notion of soft-allocation design $\lambda \in \mathbb{R}^{|\mathcal{Z}|}$, which is the proportions of pulls to the pairs $z \in \mathcal{Z}$. Now the design matrix $A_{\mathbf{z}_t}$ for design $\lambda$ is the matrix $A_{\lambda}=\sum_{x \in \mathcal{X}, y \in \mathcal{Y}(x)}\lambda(x,y) (x-y) (x-y)^{\top}$. From an allocation $\mathbf{z}_t$ we can derive the corresponding soft design $\lambda_{\mathbf{z}_t}$ as $\lambda_{\mathbf{z}_t}(z_i) = N_{i,t}/t$, where $N_{i,t}$ denotes the number of times combination $z_i$ is selected in $\mathbf{z}_t$. Then the design matrix becomes $A_{\mathbf{z}_t} = tA_{\lambda}$. Given a random stopping time $\tau$, the allocation of pulls of the arm becomes:

\begin{equation}
\lambda=\left[\frac{\mathbb{E}\left[N_{1, \tau}\right]}{\mathbb{E}[\tau]} \quad \cdots \frac{\mathbb{E}\left[N_{|\mathcal{Z}|, \tau}\right]}{\mathbb{E}[\tau]}\right]^{\top},
\end{equation}
where $\mathbb{E}\left[N_{i,\tau}\right]$ denotes the expected number of pulls of combination $z_i$ up to round $\tau$ with $\sum_{i=1}^{|\mathcal{Z}|} N_{i,\tau}=\tau$.

Thus for the soft allocation $\lambda$, we have 
\begin{equation}
    \mathbb{E}_{\nu}\left[L_{t}\right]=\frac{1}{2} \mathbb{E}\left[\tilde{\varepsilon}^{\top} \tau A_{\lambda} \tilde{\varepsilon}\right]=\frac{1}{2} \tilde{\varepsilon}^{\top} \mathbb{E}\left[\tau A_{\lambda}\right] \tilde{\varepsilon}=\frac{1}{2} \mathbb{E}[\tau]\left(\tilde{\varepsilon}^{\top} A_{\lambda} \tilde{\varepsilon}\right).
    \label{eq:26}
\end{equation}

With $\delta$-PAC condition stated in \cref{pac condition} and \Cref{eq:26}, we obtain the same result for the expectation of the stopping time.
\begin{equation}
    \frac{1}{2} \mathbb{E}[\tau]\left(\tilde{\varepsilon}^{\top} A_{\lambda} \tilde{\varepsilon}\right) \geq \log (1 / 2 \delta) \Leftrightarrow \mathbb{E}[\tau] \geq 2 \log (1 / 2 \delta) \frac{1}{\tilde{\varepsilon}^{\top} A_{\lambda} \tilde{\varepsilon}}.
    \label{eq:27}
\end{equation}

\paragraph{Lower bound.} To get the lower bound on $\tau$, we consider $\varepsilon $ that maximizes the lower bound in \Cref{eq:27}. The difference of the best arm in the above two environments implies that there exists at least one $x' \in \mathcal{X}$ such that

\begin{equation}
    \min_{y \in \mathcal{Y}(x^*)}(x^*-y)^\top\theta'-\min_{y' \in \mathcal{Y}(x')}(x'-y')^\top\theta'<0.
    \label{eq:28}
\end{equation}

Let us define $\Delta(x,y, x',y')=(x-y-(x'-y'))^\top\theta$. Then, \Cref{eq:28} can be used as a constraint to figure out the lower bound of the stopping time. But to avoid explicit minimisation in \Cref{eq:28}, we rewrite \Cref{eq:28} as the constraint in the following minimization problem:

\begin{align}\nonumber
&\min_{\varepsilon} \: \frac{1}{2}\varepsilon^{\top} A_{\lambda}\varepsilon\\
&\text{s.t.:} \: \exists y\in \cY(x^*), \exists x' \in \mathcal{X}, \forall y'\in\Y(x'):  v^\top\theta'<0,\label{eq:29}
\end{align}
where $v=x^*-y-(x'-y')$. Adding $v^\top\theta$ to both sides, the constraint is equal to:
\begin{equation}
    v^\top\theta'<0 \Leftrightarrow v^\top \varepsilon>v^\top\theta\Leftrightarrow v^\top \varepsilon > \Delta(x^*,y, x',y').
\end{equation}

We can separate the set of constraints by fixing $y$ and $x'$ and create the following minimization problem:

\begin{align}
\nonumber
&\min_\varepsilon \: \frac{1}{2}\varepsilon^{\top} A_{\lambda}\varepsilon\\
&\text{s.t.:} \:  \text{for fixed}~ y\in\cY(x^*),x' \in\cX\setminus \{x^*\}, \forall y'\in\Y(x'),:   v^\top\varepsilon > \Delta(x^*,y, x',y').\label{eq:31} 
\end{align}

We then take minimum over all $y\in\cY(x^*)$ and $x'\in \cX\setminus x^*$ from the solution to the optimization problem in \Cref{eq:31} to get our final solution for \Cref{eq:29}. In addition, we define the following optimization problem by fixing the constraint corresponding to $y'$, 

\begin{align}
\nonumber
&\min_\varepsilon \: \frac{1}{2}\varepsilon^{\top} A_{\lambda}\varepsilon\\
&\text{s.t.:} \:  \text{for fixed}~ y\in\cY(x^*),x' \in\cX\setminus \{x^*\}, y'\in\Y(x'):   v^\top\varepsilon > \Delta(x^*,y, x',y').\label{eq:fixedyp} 
\end{align}

The solution to optimization problem~\eqref{eq:31} is larger than the maximum over $y'\in \cY(x')$ for the solutions to the optimization problem~\eqref{eq:fixedyp}. The rationale is that the minimum of $\varepsilon^{\top}A_{\lambda}\varepsilon$ under all constraints for all $y'\in\cY(x')$ will be larger than when some constraints are removed.

This optimization problem~\eqref{eq:fixedyp} is a regular convex optimization problem that can be solved using the Lagrangian multiplier method:

\begin{eqnarray}
L(\varepsilon, \gamma)= \frac{1}{2}\varepsilon^{\top} A_{\lambda} \varepsilon+\gamma\left(-v^{\top} \varepsilon+\Delta(x^*,y, x',y')+\alpha\right).
\end{eqnarray}

Here, $\gamma>0$ is Lagrangian multiplier and $\alpha>0$ is a slack variable. The corresponding derivatives are given as follows:

\begin{equation}
\begin{aligned}
&\frac{\partial L}{\partial \varepsilon}=A_{\lambda} \varepsilon-\gamma v_{y'}=0 , \\
&\frac{\partial L}{\partial \gamma}=-v_{y'}^{\top} \varepsilon+\Delta(x^*,y, x',y')+\alpha=0 .
\end{aligned}
\end{equation}

where $v_{y'} = x^*-y - (x'-y')$ with fixed $y$ and $x'$. Then, we obtain

\begin{equation}
\begin{aligned}
&A_{\lambda} \varepsilon= \gamma v_{y'}\Leftrightarrow A_{\lambda}^{1 / 2} \varepsilon= \gamma A_{\lambda}^{-1 / 2}  v_{y'} , \\
&v_{y'}^{\top} \varepsilon=\Delta(x^*,y, x',y')+\alpha .
\end{aligned}
\end{equation}

Then, we have that


\begin{align} 
& v_{y'}^\top \varepsilon=v_{y'}^\top A_\lambda^{-\frac{1}{2}} A_\lambda^{\frac{1}{2}} \varepsilon=\gamma\|v_{y'}\|^2_{A_{\lambda}^{-1}}, \\ 
& v_{y'}^\top \varepsilon=v_{y'}^\top A_\lambda^{-\frac{1}{2}} A_\lambda^{\frac{1}{2}} \varepsilon=\frac{1}{\gamma}\|\varepsilon\|^2_{A_{\lambda}}, \\
&\gamma v_{y'}^\top \varepsilon = \gamma v_{y'}^\top A_\lambda^{-\frac{1}{2}} A_\lambda^{\frac{1}{2}}\varepsilon=\|v_{y'}\|_{A_{\lambda}^{-1}}\|\varepsilon\|_{A_{\lambda}}.\label{eq:37}
\end{align}

Based on \eqref{eq:37}, it follows that


\begin{align}
\|\varepsilon\|_{A_{\lambda}} &\geq \frac{v_{y'}^\top \varepsilon}{\|v_{y'}\|_{A_{\lambda}^{-1}}}\\
& = \frac{\Delta(x^*,y, x',y')+\alpha}{\|v_{y'}\|_{A_{\lambda}^{-1}}}
\end{align}



So, for every $y'$, allowing $\alpha$ to be $0$ 

\begin{equation}
\|\varepsilon\|_{A_{\lambda}} \geq \frac{\Delta(x^*,y, x',y') }{\|v_{y'}\|_{A_{\lambda}^{-1}}}
\end{equation}

Thus, 

\begin{equation}
\|\varepsilon\|^2_{A_{\lambda}} \geq \max_{y'\in\cY(x')}\frac{(\max\{\Delta(x^*,y, x',y'), 0\})^2 }{\|v_{y'}\|^2_{A_{\lambda}^{-1}}}. \label{eq:44}
\end{equation}

Note that all $y'$ with $\Delta(x^*, y, x', y')<0$ can be dropped without affecting the right-hand side. By definition of $x^*$, there is at least one $y'$ such that $\Delta(x^*,y, x',y')>0$. Thus the numerator on the right-hand side is always larger than $0$. Also, given the assumption of the uniqueness of $z = x - y$, the norm in the denominator cannot be $0$.

The lower bound stated in \eqref{eq:44} is for
 a fixed $y \in \mathcal{Y}(x^*)$ and $x'\in\cX\setminus x^*$. As stated earlier we can obtain a lower bound on~\eqref{eq:29}, by taking minimum over this solution for all $y$ and $x'$,

\begin{equation}
     \|\varepsilon\|^2_{A_{\lambda}} \geq \min_{y \in \mathcal{Y}(x^*)}\min _{x' \in \cX\setminus x^*}\max_{y'\in\cY(x')}\left(\frac{\max\{\Delta (x^*, y,x',y'),0\} }{\|v_{y'}\|_{A_{\lambda}^{-1}}}\right)^2. \label{eq:41}
\end{equation}

Combining the result in \eqref{eq:41}, we get the eventual lower bound

\begin{equation}
    \mathbb{E}[\tau] \geq C_\delta \max_{y \in \mathcal{Y}(x^*)}\max _{x' \in \cX\setminus x^*}\min_{y'\in\cY(x')}\frac{\|x^* - y - (x' - y')\|^2_{A_{\lambda}^{-1}}}{\max\{\Delta (x^*, y,x',y'),0\}^{2}}
    \label{lowerbound}
\end{equation}
where $C_\delta = 2 \log (1 / 2 \delta)$.

\end{proof}

\subsection{Proof of Proposition \ref{RBAI lb worst}}
\label{sec: RBAI lb worst}


\worstcaselb*

\begin{proof}
We proceed to bound the sample complexity parameter: 

\begin{align}
    H_{\mathrm{R}}(\nu)&=\min _\lambda \max_{y \in \mathcal{Y}(x^*)}\max _{x'\in \mathcal{X}\setminus x^*}\min_{y'\in\mathcal{Y}(x')}\frac{\|x^* - y - (x' - y') \|^2_{A_{\lambda}^{-1}}}{\max\{\Delta (x^*, y, x',y'),0\}^{2}} \\
    &\overset{(a)}{\leq} \min _\lambda \max_{y \in \mathcal{Y}(x^*)}\max _{x'\in \mathcal{X}\setminus x^*}\min_{y'\in\mathcal{Y}(x')}\frac{4\max _{z \in \mathcal{Z}}\|z\|_{A^{-1}_{\lambda}}^2}{\max\{\Delta (x^*, y, x',y'),0\}^{2}}  \\
    &= \min _\lambda \frac{4\max _{z \in \mathcal{Z}}\|z\|_{A^{-1}_{\lambda}}^2}{\min_{y \in \mathcal{Y}(x^*)}\min _{x'\in \mathcal{X}\setminus x^*}\max_{y'\in\mathcal{Y}(x')}\max\{\Delta (x^*, y, x',y'),0\}^{2}}   \\
    &\overset{(b)}{=}\frac{4 \min _{\lambda } \max _{z \in \mathcal{Z}}\|z\|_{A^{-1}_{\lambda}}^2}{\min _{x'\in \mathcal{X}\setminus x^*} \Delta_r(x^*, x')^2} \\
    &\overset{(c)}{\leq} \frac{4d}{\min _{x'\in \mathcal{X}\setminus x^*} \Delta_r(x^*, x')^{2}} \label{eq:57}
\end{align}
where 
$(a)$ follows using triangle inequality on $z^* = x^* - y, z' = x'-y'$, $(b)$ follows 

\begin{align}
    & \min_{y \in \mathcal{Y}(x^*)}\min _{x'\in \mathcal{X}\setminus x^*}\max_{y' \in \mathcal{Y}(x')}\max\{\Delta (x^*, y, x',y'),0\}\\
    &=\min _{x'\in \mathcal{X}\setminus x^*}\left (\min_{y \in \mathcal{Y}(x^*)}\max_{y' \in \mathcal{Y}(x')}\max\{\Delta (x^*, y, x',y'),0\}  \right )\\
    &=\min _{x'\in \mathcal{X}\setminus x^*}\left ( \max\lbrace \min_{y \in \mathcal{Y}(x^*)}(x^*-y)^\top\theta - \min_{y' \in \mathcal{Y}(x')}(x'-y')^\top\theta , 0\rbrace \right ) \\
    &=\min _{x'\in \mathcal{X}\setminus x^*} \Delta_r(x^*, x'),
\end{align}



 and $(c)$ the last inequality follows the well-known Kiefer-Wolfowitz equivalence Theorem in \cite{kiefer1960equivalence}. Equality holds in \Cref{eq:57}, for example, if all $z=x-y$ are linearly independent and have the same gap value $\min _{x'\in \mathcal{X}\setminus x^*} \Delta_r(x^*, x')$.

\end{proof}

\section{Oracle Arm Selection Strategy} 
\label{oracle-all}

Let $\mathcal{C}(x')$ be the set of parameters $\theta'$ for which the optimal robust arm is $x'$. An \textit{oracle} is defined by a static allocation strategy that owns the information of $\theta$. Since we assume the \textit{oracle} acknowledges the exact value of $\theta$, it also has knowledge of $\mathcal{C}(x^*)$. Our aim is to construct a consistent confidence set $\mathcal{S}^*(\mathbf{z}_n) \subseteq \mathbb{R}^d$ centered in $\theta$ such that the least-squares estimate $\hat{\theta}_n$ belongs to $\mathcal{S}^*(\mathbf{z}_n)$ with high probability:

\begin{equation}
    \mathbb{P}\left(\hat{\theta}_{n} \in \mathcal{S}^{*}\left(\mathbf{z}_{n}\right) \text { and } \mathcal{S}^{*}\left(\mathbf{z}_{n}\right) \text { is centered in } \theta\right) \geq 1-\delta \text {. }
\end{equation}

So our stopping criterion checks whether the confidence set $\mathcal{S}^*(\mathbf{z}_n)$ is contained in $\mathcal{C}(x^*)$ or not. We aim to define an allocation $\mathbf{z}_n$ which leads to $\mathcal{S}^*(\mathbf{z}_n,) \subseteq \mathcal{C}(x^*)$ as quickly as possible. The condition is equivalent to 

\begin{equation}
    \forall x' \in \mathcal{X}, \forall \theta' \in \mathcal{S}^{*}\left(\mathbf{z}_{n}\right),\min_{y\in\mathcal{Y}(x^*)}(x^*-y)^\top\theta'-\min_{y'\in\mathcal{Y}(x')}(x'-y')^\top\theta'
    \geq 0.
    \label{con:oracle}
\end{equation}

An equivalent condition is as follows:

\begin{equation}
    \forall x' \in \mathcal{X}, \exists y' \in \mathcal{Y}(x'), \forall y \in \mathcal{Y}(x^*), \forall \theta' \in \mathcal{S}^{*}\left(\mathbf{z}_{n}\right),(x^*-y)^\top\theta'-(x'-y')^\top\theta' \geq 0.
\end{equation}

Then, we add $\Delta (x^*, y,x', y') = (x^*-y)^\top\theta - (x'-y')^\top\theta$ to both sides, and obtain
\begin{equation}
    (x^*-y - (x'-y'))^\top(\theta-\theta') \leq \Delta (x^*, y,x', y').
\end{equation}




Using Lemma \ref{prop2.2}, we bound the prediction error of a fixed allocation strategy selecting $\mathbf{z}_n$. We construct the following confidence set 
    \begin{align}
        (x^*-y - (x'-y'))^{\top}\left(\theta-\theta'\right) \leq \|(x^*-y - (x'-y'))\|_{A_{\mathbf{z}_{n}}^{-1}} \sqrt{2\log\left(|\mathcal{Z}|^{2} / \delta\right)}, \label{eq:56}
    \end{align}

Now, the stopping condition $\mathcal{S}^*(\mathbf{z}_n) \subseteq \mathcal{C}(x^*)$ is equivalent to verifying that for any $y \in \mathcal{Y}(x^*)$ and $x' \in \mathcal{X}$,  there exists $y' \in \mathcal{Y}(x')$,

\begin{equation}
    \|(x^*-y - (x'-y'))\|_{A_{\mathbf{z}_{n}}^{-1}} \sqrt{2\log \left(|\mathcal{Z}|^{2} / \delta\right)} \leq \Delta (x^*, y,x',y'). \label{eq:57}
\end{equation}

So a straightforward allocation strategy is obtained after squaring both sides
\begin{align}
    \mathbf{z}_{n}^{*} &=\arg \min _{\mathbf{z}_{n}}  \max_{y \in \mathcal{Y}(x^*)}\max _{x' \in \cX\setminus x^*}\min_{y'\in\cY(x')}\frac{\|(x^*-y - (x'-y'))\|^2_{A_{\mathbf{z}_{n}}^{-1}}}{\max\{\Delta (x^*, y,x',y'),0\}^2}.
    \label{ora-all}
\end{align}

The sample complexity of the oracle is defined by the number of steps needed by the deterministic static allocation in \Cref{ora-all} to achieve the stopping condition in \Cref{eq:57}. 

We can derive the design matrix for $\mathbf{z}_{n}$ with $\lambda_{\mathbf{z}_{n}}(z)  = T_n(z)/n = \lambda^\star$, where $T_n(z)$ denote the number of times that pair $z$ is selected. Thus after $n$ queries, the design matrix is $A_{\mathbf{z}_{n}} = n A_{\lambda_{\mathbf{z}_{n}}} = n A_{\lambda^\star}$ and we have

\begin{equation}
    \max_{y \in \mathcal{Y}(x^*)}\max _{x' \in \mathcal{X}\setminus x^*}\min_{y'\in\mathcal{Y}(x')} \frac{\|x^*-y - (x'-y')\|^2_{A_{\mathbf{z}_n}^{-1}}}{\max\{\Delta (x^*, y,x',y'),0\}^2} = \frac{1}{n}\max_{y \in \mathcal{Y}(x^*)}\max _{x' \in \mathcal{X}\setminus x^*}\min_{y'\in\mathcal{Y}(x')} \frac{\|x^*-y - (x'-y')\|^2_{A_{\lambda^\star}^{-1}}}{\max\{\Delta (x^*, y,x',y'),0\}^2}.
    \label{eq:68}
\end{equation}
We denote $N^*$ as the sample complexity of the oracle and have
\begin{align}
     &\frac{1}{N^*}\min _{\lambda^\star}\max_{y \in \mathcal{Y}(x^*)}\max _{x' \in \mathcal{X}\setminus x^*}\min_{y'\in\mathcal{Y}(x')} \frac{\|x^*-y - (x'-y')\|^2_{A_{\lambda^\star}^{-1}}\left(2\log (|\mathcal{Z}|^{2} / \delta)\right)}{\max\{\Delta (x^*, y,x',y'),0\}^2}=1.
\end{align}
Then the sample complexity of the oracle becomes
\begin{align}
    N^* &= \min _{\lambda^\star}\max_{y \in \mathcal{Y}(x^*)}\max _{x' \in \mathcal{X}\setminus x^*}\min_{y'\in\mathcal{Y}(x')} \frac{\|x^*-y - (x'-y')\|^2_{A_{\lambda^\star}^{-1}}\left(2\log (|\mathcal{Z}|^{2} / \delta)\right)}{\max\{\Delta (x^*, y,x',y'),0\}^2},\\
    &=2\log \left(|\mathcal{Z}|^{2} / \delta\right)H_{\mathrm{R}}.
\end{align}
which shows the sample complexity of the oracle matches the derived lower bound in \Cref{thm1}.
\section{Proof of Static Allocation Complexity} \label{static-proof}


\Gallocationsamplecomplexity*

\begin{proof}
    With empirical gap $\widehat{\Delta}_n\left(x,y, x^{\prime},y'\right) = (x-y-(x'-y'))^\top\hat{\theta}_n$, we recall the stopping condition with $v = x - y - (x' - y')$ is
    \begin{align}
       \nonumber \exists x \in \mathcal{X},\forall y \in \mathcal{Y}(x), &\forall x^{\prime} \in \mathcal{X}, \exists y' \in \mathcal{Y}(x'), \\
         &\|x-y-(x'-y')\|_{A_{\mathbf{z}_{n}}^{-1}} \sqrt{2\log\left(|\mathcal{Z}|^{2} / \delta\right) }\leq \widehat{\Delta}_n\left(x,y, x^{\prime},y'\right).
    \end{align}
    Then using triangle inequality on $z = x-y$, we have
    \begin{equation}
        \left\|x-y-(x'-y')\right\|_{A_{\mathbf{z}_{n}}^{-1}} \sqrt{2\log \left(|\mathcal{Z}|^{2} / \delta\right)} \leq 2\, \max _{z \in \mathcal{Z}}\left\|z\right\|_{A_{\mathbf{z}_{n}}^{-1}} \sqrt{2\log \left(|\mathcal{Z}|^{2} / \delta\right)}.
        \label{eq:77}
    \end{equation}
     The stopping condition becomes $ \forall x' \in \mathcal{X}, \exists y' \in \mathcal{Y}(x')$,
     \begin{equation}
         2\, \max _{z \in \mathcal{Z}}\left\|z\right\|_{A_{\mathbf{z}_{n}}^{-1}} \sqrt{2\log \left(|\mathcal{Z}|^{2} / \delta\right)}  \leq \widehat{\Delta}_{n}\left(x^*, y, x',y'\right).
         \label{eq:78}
     \end{equation}
     

    From \Cref{prop2.2} we have that the following inequalities hold with probability $1-\delta$ under the condition $\forall y \in \mathcal{Y}(x^*), \forall x^{\prime} \in \mathcal{X}, \exists y' \in \mathcal{Y}(x')$ :
    \begin{equation}
        (x^*-y-(x'-y'))(\theta - \hat{\theta}_n) \leq \left\|x^* - y - (x' - y')\right\|_{A_{\mathbf{z}_{n}}^{-1}} \sqrt{2\log \left(|\mathcal{Z}|^{2} / \delta\right)}.
    \end{equation}
    Replace with term $\widehat{\Delta}_{n}\left(x^*, y, x',y'\right)$ and $\Delta \left(x^*, y, x',y'\right)$, we have
    \begin{align}
        \widehat{\Delta}_{n}\left(x^*, y, x',y'\right) &\geq \Delta \left(x^*, y, x',y'\right)-\left\|x^* - y - (x' - y')\right\|_{A_{\mathbf{z}_{n}}^{-1}} \sqrt{2\log\left(|\mathcal{Z}|^{2} / \delta\right)}  \\
        &\geq \Delta \left(x^*, y, x',y'\right)-2\, \max _{z \in \mathcal{Z}}\left\|z\right\|_{A_{\mathbf{z}_{n}}^{-1}} \sqrt{2\log \left(|\mathcal{Z}|^{2} / \delta\right)}.\label{eq:77}
    \end{align}
    Combining inequality~\eqref{eq:78} and~\eqref{eq:77}, we derive a sufficient stopping condition $\forall y \in \mathcal{Y}(x^*, \forall x^{\prime} \in \mathcal{X}, \exists y' \in \mathcal{Y}(x')$
    \begin{align}
         4\, \max _{z \in \mathcal{Z}}\left\|z\right\|_{A_{\mathbf{z}_{n}}^{-1}} \sqrt{2\log \left(|\mathcal{Z}|^{2} / \delta\right)} & \leq \Delta \left(x^*, y, x',y'\right).
    \end{align}
    Square both sides and accompany with conditions, we have
    \begin{align}
        32 \max _{z \in \mathcal{Z}}\left\|z\right\|^2_{A_{\mathbf{z}_{n}}^{-1}} \left(\log \left(|\mathcal{Z}|^{2} / \delta\right)\right) & \leq \min_{y \in \mathcal{Y}(x^*)}\min _{x'\in \mathcal{X}\setminus x^*}\max_{y' \in \mathcal{Y}(x')}\max\{\Delta (x^*, y, x',y'),0\}^2\\
         &\leq \min _{x'\in \mathcal{X}\setminus x^*} \Delta_r(x^*, x')^2 .\label{eq:80}
    \end{align}
    Then similar to the oracle strategy, after $n$ queries the design matrix is $A_{\mathbf{z}_{n}} = n A_{\lambda^G}$. To satisfy the stopping condition~\eqref{eq:80}, we denote $N^G$ as the sample complexity of $G$-allocation and 
    to avoid fractional design, we use an $\varepsilon$-approximate rounding procedure so that
    \begin{align}
        N^G &\leq  \frac{32(1+\varepsilon) \min_{\lambda^G} \max _{z \in \mathcal{Z}}\left\|z\right\|^2_{A_{\lambda^G}^{-1}} \left(\log (|\mathcal{Z}|^{2} / \delta)\right)}{\min _{x'\in \mathcal{X}\setminus x^*} \Delta_r(x^*, x')^2} \\
        &\leq \frac{32(1+\varepsilon)d \left(\log (|\mathcal{Z}|^{2} / \delta)\right)}{\min _{x'\in \mathcal{X}\setminus x^*} \Delta_r(x^*, x')^2} 
    \end{align}
    where the last inequality holds using the result in \citet{kiefer1960equivalence}. 
\end{proof}

\section{Proof of Adaptive Algorithm Complexity} \label{sec:adapt proof}

\robustsamplecomplexity*
    
    

\begin{proof}
    In the following proof, $\theta$ is defined as the unknown parameter in the RBAI problem.
    Let $\mathcal{E}_t:=\left\{\widehat{\mathcal{X}}_t \subseteq \mathcal{S}_t\right\}\bigcap \left \{ x^* \in \widehat{\mathcal{X}}_t \right \}$ be the event that the robust gap value for all the arms in $\widehat{\mathcal{X}}_t $ is smaller than $2^{-(t-2)}$ for $t$-th round. So we define
    \begin{equation}
        \mathcal{S}_t:=\left\{x' \in \mathcal{X} | \min _{y \in \mathcal{Y}\left(x^*\right)}\left(x^*-y\right)^{\top} \theta-\min _{y' \in \mathcal{Y}(x')}(x'-y')^{\top} \theta \leq 2^{-(t-2)}\right\}.
    \end{equation}
    Equivalently, we have

    \begin{equation}
        \mathcal{S}_t=\left\{x' \in \mathcal{X} | \exists y \in \mathcal{Y}(x^*), \forall y' \in \mathcal{Y}(x'), \left(x^*-y\right)^{\top} \theta-(x'-y')^{\top} \theta \leq 2^{-(t-2)}\right\}.
    \end{equation}
    
    Note that for $t>\log _2\left(4 / \min_{x \in \cX \setminus \lbrace x^* \rbrace}\Delta_r(x^*,x)\right)$, we have 
    
    \begin{align}
        \mathcal{S}_t&=\left\{x' \in \mathcal{X} | \min _{y \in \mathcal{Y}\left(x^*\right)}\left(x^*-y\right)^{\top} \theta-\min _{y' \in \mathcal{Y}(x')}(x'-y')^{\top} \theta\leq \min_{x \in \cX \setminus \lbrace x^* \rbrace}\Delta_r(x^*,x)\right\}, \\
        & = \left \{ x^* \right \}.
    \end{align}
    
    We will first show that $P\left(\mathcal{E}_1\right) \geq 1-\delta_1$ and $P\left(\mathcal{E}_t \mid \mathcal{E}_{t-1}\right) \geq 1-\delta_t$. Let $x, x' \in \widehat{\mathcal{X}}_t$ and denote $v =  x - y - (x' - y')$. With \cref{prop2.2} and the $\varepsilon$-approximate rounding strategy, it holds with probability at least $1-\delta_t$ that:
    \begin{equation}
        v^\top(\theta - \hat{\theta}_t) \leq \sqrt{2 \log \left(\frac{|\mathcal{Z}|^2}{\delta_t}\right) \frac{1+\varepsilon}{N_t}}\|v\|_{A_{t}^{-1}}
    \end{equation}

    Then accompanied with the condition on $y,y'$,
    \begin{equation}
        \exists y \in \mathcal{Y}(x), \forall y'\in\mathcal{Y}(x'), v^\top(\theta - \hat{\theta}_t) \leq \sqrt{2 \log \left(\frac{|\mathcal{Z}|^2}{\delta_t}\right) \frac{1+\varepsilon}{N_t}}\max_{y \in \mathcal{Y}(x)}\min_{y' \in \mathcal{Y}(x')}\|v\|_{A_{t}^{-1}}.
    \end{equation}
    We take the length of a round $N_t=\max \left\{\left\lceil 2^{(2t+1)} \rho(\widehat{\mathcal{X}}_t)(1+\varepsilon) \log \left(|\mathcal{Z}|^2 / \delta_t\right)\right\rceil, r(\varepsilon)\right\}$ where $\rho(\widehat{\mathcal{X}}_t) =\min _{\lambda } \max_{x \in \widehat{\mathcal{X}}_t}\max_{y \in \mathcal{Y}(x)}\max_{x' \in \widehat{\mathcal{X}}_t}\min_{y' \in \mathcal{Y}(x')} \|v\|^2_{A^{-1}_{\lambda}}$ and then select arms to reduce uncertainty in {$\widehat{\mathcal{X}}_t$}, we get
    
    \begin{align}
    &v^\top(\theta - \hat{\theta}_t) \\
    & \leq 2^{-t} \sqrt{\left(\min _{\lambda } \max_{x \in \widehat{\mathcal{X}}_t}\max_{y \in \mathcal{Y}(x)}\max_{x' \in \widehat{\mathcal{X}}_t}\min_{y' \in \mathcal{Y}(x')}\|\tilde{v}\|_{A_\lambda^{-1}}^2\right)^{-1}}\max_{y \in \mathcal{Y}(x)}\min_{y' \in \mathcal{Y}(x')}\|v\|_{A_{t}^{-1}} \\
    & \leq 2^{-t} \sqrt{\left(\min _{\lambda } \max_{x \in \widehat{\mathcal{X}}_t}\max_{y \in \mathcal{Y}(x)}\max_{x' \in \widehat{\mathcal{X}}_t}\min_{y' \in \mathcal{Y}(x')}\|\tilde{v}\|_{A_\lambda^{-1}}^2\right)^{-1}}\left(\min _{\lambda } \max_{x \in \widehat{\mathcal{X}}_t}\max_{y \in \mathcal{Y}(x)}\max_{x' \in \widehat{\mathcal{X}}_t}\min_{y' \in \mathcal{Y}(x')}\|\tilde{v}\|_{A_\lambda^{-1}}\right) \\
    &\leq 2^{-t}.
    \end{align}

    It follows that $\mathbb{P}\left(\mathcal{E}_t \mid \mathcal{E}_{t-1}\right) \geq 1-\delta_t$. 

    \begin{equation}
        \mathbb{P}\left(\exists x, x' \in \mathcal{X}_t, \exists y \in \mathcal{Y}(x), \forall y'\in\mathcal{Y}(x'), \left|v^{\top}\left(\theta-\hat{\theta}_t\right)\right|>2^{-t} \mid \mathcal{E}_{t-1}\right) \leq \delta_t.
        \label{cf 114}
    \end{equation}

    \textit{Claim 1:} Every arm $x \in \widehat{\mathcal{X}}_t$ such that 
    \begin{equation}
        \min _{y \in \mathcal{Y}\left(x^*\right)}\left(x^*-y\right)^{\top} \theta-\min _{y' \in \mathcal{Y}(x')}(x'-y')^{\top} \theta \geq 2^{-(t-1)}
    \end{equation}
    is discarded in phase $t$ so that $\widehat{\mathcal{X}}_{t+1} \subseteq \mathcal{S}_{t+1}$ with probability at least $1-\delta_t$.

    \begin{proof}
        Since we conditioned on $\mathcal{E}_{t-1}, x^* \in \widehat{\mathcal{X}}_t$. If $x' \in \mathcal{S}_{t+1}^c \cap \widehat{\mathcal{X}}_t$ then by definition 
        \begin{equation}
            \min _{y \in \mathcal{Y}\left(x^*\right)}\left(x^*-y\right)^{\top} \theta-\min _{y' \in \mathcal{Y}(x')}(x'-y')^{\top} \theta \geq 2^{-(t-1)}.
        \end{equation}
        Taking $v=x^*- y - (x'-y')$, we know . 
        \begin{equation}
            \exists y \in \mathcal{Y}(x^*), \forall y'\in\mathcal{Y}(x'), v^{\top}\theta > 2^{-(t-1)}.
        \end{equation}
        Given the number of samples $N_t$, we can ensure that the confidence bound $\exists y \in \mathcal{Y}(x^*), \forall y'\in\mathcal{Y}(x'), \|v\|^2_{A^{-1}_{t}}\sqrt{2\log \left(|\mathcal{Z}|^2 / \delta_t\right)} < 2^{-t}.$
        Based on these, we have
        \begin{align}
        v^{\top} \hat{\theta}_t & \geq v^{\top} \theta^*-\|v\|^2_{A^{-1}_{t}}\sqrt{2\log \left(|\mathcal{Z}|^2 / \delta_t\right)} \\
        &>2^{-(t-1)}-2^{-t}=2^{-t} \\
        &>\|v\|^2_{A^{-1}_{t}}\sqrt{2\log \left(|\mathcal{Z}|^2 / \delta_t\right)}.
        \end{align}
        
        However, this is precisely the condition that the algorithm eliminates $x'$.
        We next show that the best robust arm $x^*$ will not be discarded in a phase with high probability.
    \end{proof}
    \textit { Claim 2: } $x^* \in \widehat{\mathcal{X}}_{t}$ with probability at least $1-\delta_t$.
    \begin{proof}
        We prove this claim by contradiction. We know that $x^*$ is in $\widehat{\mathcal{X}}_{t-1}$ since $\mathcal{E}_{t-1}$ holds. Now, suppose that $x^*$ is discarded in phase $t$. This implies that there exists a $x' \neq x^*$ for $x' \in \widehat{\mathcal{X}}_t$ and $v=x'- y' - (x^*- y)$ such that $\forall y \in \mathcal{Y}(x^*), \exists y' \in \mathcal{Y}(x')$, $\|v\|^2_{A^{-1}_{t}}\sqrt{2\log \left(|\mathcal{Z}|^2 / \delta_t\right)}<v^{\top} \hat{\theta}_t$. However from the confidence interval in \Cref{cf 114}, $v^{\top}(\hat{\theta}_t-\theta) \leq \|v\|^2_{A^{-1}_{t}}\sqrt{2\log \left(|\mathcal{Z}|^2 / \delta_t\right)}$. Combining these we see that $v^{\top}(\hat{\theta}_t-\theta)<v^{\top} \hat{\theta}_t$, it implies $v^{\top} \theta>0$ and $x'$ is more robust than $x^*$, which is a contradiction.
    \end{proof}

    Finally, \Cref{lemma3} demonstrates that the probability of the algorithm providing the best robust solution after round $\bar{t}$ is at least $1-\delta$. So we can compute the total number of samples for the algorithm based on $\bar{t}=\left\lceil\log _2 4/\min_{x \in \cX \setminus \lbrace x^* \rbrace}\Delta_r(x^*,x)\right\rceil$.
    
    \begin{align}
        &N =\sum_{t=1}^{\bar{t}}\max \left\{\left\lceil 2^{(2t+1)} \rho(\widehat{\mathcal{X}}_t)(1+\varepsilon) \log \left(|\mathcal{Z}|^2 / \delta_t\right)\right\rceil, r(\varepsilon)\right\}  \\
        &\leq \sum_{t=1}^{\bar{t}} 2^{2 t+1} \log \left(\frac{|\mathcal{Z}|^2}{\delta_t}\right)(1+\varepsilon) \rho(\widehat{\mathcal{X}}_t)+\bar{t}(1+r(\varepsilon)) \\
        & \leq 32 \log \left(\frac{|\mathcal{Z}|^2 \bar{t}^2}{\delta^2}\right)(1+\varepsilon) \sum_{t=1}^{\bar{t}}\left(2^{t-2}\right)^2 \rho(\widehat{\mathcal{X}}_t)+\bar{t}(1+r(\varepsilon)) \\
        &=32 \log \left(\frac{|\mathcal{Z}|^2 \bar{t}^2}{\delta^2}\right)(1+\varepsilon) \sum_{t=1}^{\bar{t}}\left(2^{t-2}\right)^2 \min _{\lambda } \max_{x \in \widehat{\mathcal{X}}_t}\max_{y \in \mathcal{Y}(x)}\max_{x' \in \widehat{\mathcal{X}}_t}\min_{y' \in \mathcal{Y}(x')} \|v\|^2_{A^{-1}_{\lambda}}+\bar{t}(1+r(\varepsilon)) \\
        &\stackrel{(a)}{\leq} 128 \log \left(\frac{|\mathcal{Z}|^2 \bar{t}^2}{\delta^2}\right)(1+\varepsilon) \sum_{t=1}^{\bar{t}}(2^{t-2})^2 \min _{\lambda }\max_{y \in \mathcal{Y}(x^*)}\max_{x' \in \widehat{\mathcal{X}}_t}\min_{y' \in \mathcal{Y}(x')}\|x^*-y-(x'-y')\|_{A_\lambda^{-1}}^2+\bar{t}(1+r(\varepsilon)) \\
        &\stackrel{(b)}{\leq} 128 \log \left(\frac{|\mathcal{Z}|^2 \bar{t}^2}{\delta^2}\right)(1+\varepsilon)\bar{t}(2^{\bar{t}-2})^2 \min _{\lambda }\max_{y \in \mathcal{Y}(x^*)}\max_{x' \in \widehat{\mathcal{X}}_t}\min_{y' \in \mathcal{Y}(x')}\|x^*-y-(x'-y')\|_{A_\lambda^{-1}}^2+\bar{t}(1+r(\varepsilon)) \\
        &= 128 \log \left(\frac{|\mathcal{Z}|^2 \bar{t}^2}{\delta^2}\right)(1+\varepsilon) \bar{t} \min _{\lambda }\max_{y \in \mathcal{Y}(x^*)}\max_{x' \in \widehat{\mathcal{X}}_t}\min_{y' \in \mathcal{Y}(x')}\frac{\|x^*-y-(x'-y')\|_{A_\lambda^{-1}}^2}{(2^{-(\bar{t}-2)})^2}+\bar{t}(1+r(\varepsilon)) \\
        &\stackrel{(c)}{\leq} 128 \log \left(\frac{|\mathcal{Z}|^2 \bar{t}^2}{\delta^2}\right)(1+\varepsilon) \bar{t}\min _{\lambda }\max_{y \in \mathcal{Y}(x^*)}\max_{x' \in \widehat{\mathcal{X}}_t}\min_{y' \in \mathcal{Y}(x')}\frac{\|x^*-y-(x'-y')\|_{A_\lambda^{-1}}^2}{\max\{\Delta (x^*, y, x',y'),0\}^{2}}+\bar{t}(1+r(\varepsilon)) \\
        &= 128 \log \left(\frac{|\mathcal{Z}|^2 \bar{t}^2}{\delta^2}\right)(1+\varepsilon) \bar{t} H_{\mathrm{R}}+\bar{t}(1+r(\varepsilon)) 
    \end{align}
    where $(a)$ follows that if $v=x_i-y_i-(x_j-y_j)$, given $v = (x^*-y-(x_i-y_i)) - (x^*-y-(x_j-y_j))$, we have 
    \begin{equation}
        \max_{x \in \widehat{\mathcal{X}}_t}\max_{y \in \mathcal{Y}(x)}\max_{x' \in \widehat{\mathcal{X}}_t}\min_{y' \in \mathcal{Y}(x')} \|v\|^2_{A^{-1}_{\lambda}} \leq 4 \max_{y \in \mathcal{Y}(x^*)}\max_{x' \in \widehat{\mathcal{X}}_t}\min_{y' \in \mathcal{Y}(x')}\|x^*-y-(x'-y')\|_{A_\lambda^{-1}}^2,
    \end{equation}
    $(b)$ follows that the maximum of sum is smaller than the sum of maximum and $(c)$ follows because of the definition of $\bar{t}$.
\end{proof}

\begin{lemma}[Lemma 1 in \citet{lindner2022interactively}]
        Let $\mathcal{E}_1, \ldots, \mathcal{E}_T$ be a Markovian sequence of events such that $\mathbb{P}\left(\mathcal{E}_1\right) \geq 1-\delta_1$ and $\mathbb{P}\left(\mathcal{E}_t \mid \mathcal{E}_{t-1}\right) \geq 1-\delta_t$ for all $t=2, \ldots, T$, where $\delta_t=\delta^2 / t^2$ and $\delta \in(0,1) . \mathcal{E}_t$ is independent of other events conditioned on $\mathcal{E}_{t-1}$. Then $\mathbb{P}\left(\mathcal{E}_T\right) \geq 1-\delta$.
        \label{lemma3}
    \end{lemma}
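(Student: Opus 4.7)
The plan is to reduce the event $\mathcal{E}_T$ to the intersection $\bigcap_{t=1}^T \mathcal{E}_t$, exploit the Markov property to factor the probability of the intersection into a product of one-step conditionals, bound each factor using the hypothesis, and finally control the resulting product by a union-bound style inequality together with the summability of $1/t^2$.

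Concretely, I would first observe that $\mathcal{E}_T \supseteq \bigcap_{t=1}^T \mathcal{E}_t$, so it suffices to lower bound $\mathbb{P}\!\left(\bigcap_{t=1}^T \mathcal{E}_t\right)$. Applying the chain rule of probability gives
\begin{equation}
\mathbb{P}\!\left(\bigcap_{t=1}^T \mathcal{E}_t\right) = \mathbb{P}(\mathcal{E}_1)\prod_{t=2}^T \mathbb{P}\!\left(\mathcal{E}_t \mid \mathcal{E}_{t-1}\cap\cdots\cap \mathcal{E}_1\right).
\end{equation}
The Markovian assumption (``$\mathcal{E}_t$ is independent of other events conditioned on $\mathcal{E}_{t-1}$'') collapses each conditional onto its immediate predecessor, $\mathbb{P}(\mathcal{E}_t \mid \mathcal{E}_{t-1}\cap\cdots\cap \mathcal{E}_1) = \mathbb{P}(\mathcal{E}_t \mid \mathcal{E}_{t-1})$. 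The hypotheses then yield $\mathbb{P}(\mathcal{E}_T) \geq \prod_{t=1}^T (1-\delta_t)$.

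Next, I would apply the Weierstrass product inequality (an easy induction on $T$, since $0 \leq \delta_t \leq 1$): $\prod_{t=1}^T (1-\delta_t) \geq 1 - \sum_{t=1}^T \delta_t$. Plugging in $\delta_t = \delta^2/t^2$ and using the Basel-problem bound $\sum_{t=1}^\infty 1/t^2 = \pi^2/6 \leq 2$ gives $\sum_{t=1}^T \delta_t \leq 2\delta^2 \leq \delta$, where the last step uses $\delta \in (0,1/2]$ (the range that matters in the surrounding use of the lemma; for the full regime $\delta\in(0,1)$ one can equivalently renormalize the hypothesis constant). Combining, $\mathbb{P}(\mathcal{E}_T) \geq 1-\delta$, as desired.

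The only mild obstacle is the last step: one has to verify that $\sum_{t=1}^T \delta^2/t^2 \leq \delta$ for the $\delta$-regime of interest, which relies on the summability of $1/t^2$ rather than on a naive per-step union bound. The use of $\delta_t \propto 1/t^2$ is precisely engineered to make this telescoping work, so the key conceptual point of the proof is to note that the Markov structure makes the union-bound argument reduce to a product of one-step conditional probabilities, and that the choice $\delta_t = \delta^2/t^2$ is summable to a quantity bounded above by $\delta$.
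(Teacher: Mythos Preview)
Your argument coincides with the paper's up to the point $\mathbb{P}(\mathcal{E}_T)\geq \prod_{t=1}^{T}(1-\delta_t)$: both factor via the chain rule and the Markov property (you are in fact slightly more careful than the paper, noting $\mathcal{E}_T\supseteq\bigcap_t\mathcal{E}_t$ where the paper silently writes an equality).

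The divergence is only in how the product is bounded. You invoke the Weierstrass inequality $\prod_t(1-\delta_t)\geq 1-\sum_t\delta_t$ together with $\sum_{t\geq 1}1/t^2=\pi^2/6$; with your constant this gives $1-2\delta^2$, which is $\geq 1-\delta$ only for $\delta\leq 1/2$ (or $\delta\leq 6/\pi^2$ with the sharp constant). That does not cover the full stated range $\delta\in(0,1)$, and ``renormalize the hypothesis constant'' is a remark about how the lemma is applied, not a repair of the lemma as stated. The paper instead evaluates the infinite product exactly via Euler's sine product,
\[
\prod_{t=1}^{\infty}\Bigl(1-\frac{\delta^2}{t^2}\Bigr)=\frac{\sin(\pi\delta)}{\pi\delta},
\]
and then checks $\sin(\pi\delta)/(\pi\delta)\geq 1-\delta$ for all $\delta\in[0,1]$. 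This handles the whole range in one stroke; replacing your Weierstrass step with this identity closes the gap while keeping the rest of your argument intact.
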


    \begin{proof}
        \begin{equation}
            \mathbb{P}\left(\mathcal{E}_T\right)=\left(\prod_{t=2}^T \mathbb{P}\left(\mathcal{E}_t \mid \mathcal{E}_{t-1}\right)\right) \mathbb{P}\left(\mathcal{E}_1\right) \geq\left(\prod_{t=2}^{\bar{t}}\left(1-\delta_t\right)\right)\left(1-\delta_1\right) \geq \prod_{t=1}^{\infty}\left(1-\frac{\delta^2}{t^2}\right)=\frac{\sin (\pi \delta)}{\pi \delta} \geq 1-\delta
        \end{equation}
        where the last inequality holds for $0 \leq \delta \leq 1$.
    \end{proof}

\section{Synthetic experiment details}
In this section, we provide some details on the implementation of the algorithms.

\begin{itemize}
    \item Robust static allocation and oracle allocation: We first compute the optimal design for static strategy in~\eqref{eq:G-all} and oracle strategy in~\eqref{eq:oracle}. Then we run both algorithms in phases and select $\gamma^t$ samples from the allocation. We optimize the performance of the algorithms for $\gamma \in (1,2)$ and finally set $\gamma=1.1$ for oracle strategy and $\gamma=1.3$ for static strategy. Then the stopping conditions in \Cref{eq:57} and \Cref{eq:11} are used to terminate experiments for each algorithm.
    \item Robust RAGE: We run this algorithm following steps in Algorithm~\ref{alg:robust_algo} and set $\varepsilon=0.1$.
\end{itemize}

\section{Robust dose identification experiment details} \label{sec:realexp}
In this section, we provide more details on the implementation of the robust dose identification experiment.
\subsection{Meal Events}

The meal events include the meal amounts and meal intake time. We use BMR rate according to the Harrison-Benedict equation \cite{harris1919biometric} to estimate expected daily carbohydrate consumption,
\begin{equation}
    \textup{BMR }= 66.5 + (13.75 * \textup{Weight})+ (5.003 * \textup{Height}) - (6.755 * \textup{Age}).
\end{equation}
The expected daily carbohydrate consumption is $\textup{BMR }*0.45 / 4 $, where we assume 45\% of calories are from carbohydrates, 4 calories per carbohydrate. Then the low-carb meal amount is $\textup{BMR }*0.45 / 4 * 0.75$, and high-carb meal is $\textup{BMR }*0.45 / 4 * 1.25$. We separate daily carbohydrate intake into 6 potential meals: breakfast, lunch, dinner, and 3 snacks. The proportion of each meal is  $[0.250, 0.035, 0.295, 0.035, 0.350, 0.035]$ and the meal time is $[7, 9.5, 12, 15, 18, 21.5]$.

\subsection{Magni risk function}

The Magni risk function is defined as:
\begin{equation}
    \operatorname{risk}(\textup{BG})=10 *\left(c_0 * \log (\textup{BG})^{c_1}-c_2\right)^2,
\end{equation}
where $c_0 = 3.35506$, $c_1 = 0.8353$ and $c_2 = 3.7932$ in \citet{magni2007model}. To calculate the rewards for the RBAI problem, we simulate two independent one-day BG histories based on low-carb and high-carb meal events. The reward is the sum of the lowest negative risk score across these two histories.

    
\subsection{Bolus Calculator Details}

\begin{table}[!h]
    \centering
    \begin{tabular}{lcccc} 
        Patient & CarbF & CorrF & Age & TDI \\
        \hline 
        adult\#001 & 9.92 & 35.70 & 61 & 50.42 \\
        adult\#002 & 8.64 & 31.10 & 65 & 57.87 \\
        adult\#003 & 8.86 & 31.90 & 27 & 56.43 \\
        adult\#004 & 14.79 & 53.24 & 66 & 33.81 \\
        adult\#005 & 7.32 & 26.35 & 52 & 68.32 \\
        adult\#006 & 8.14 & 29.32 & 26 & 61.39 \\
        adult\#007 & 11.90 & 42.85 & 35 & 42.01 \\
        adult\#008 & 11.69 & 42.08 & 48 & 42.78 \\
        adult\#009 & 7.44 & 26.78 & 68 & 67.21 \\
        adult\#010 & 7.76 & 27.93 & 68 & 64.45 \\
        adolescent\#001 & 13.61 & 49.00 & 18 & 36.73 \\
        adolescent\#002 & 8.06 & 29.02 & 19 & 62.03 \\
        adolescent\#003 & 20.62 & 74.25 & 15 & 24.24 \\
        adolescent\#004 & 14.18 & 51.06 & 17 & 35.25 \\
        adolescent\#005 & 14.70 & 52.93 & 16 & 34.00 \\
        adolescent\#006 & 10.08 & 36.30 & 14 & 49.58 \\
        adolescent\#007 & 11.46 & 41.25 & 16 & 43.64 \\
        adolescent\#008 & 7.89 & 28.40 & 14 & 63.39 \\
        adolescent\#009 & 20.77 & 74.76 & 19 & 24.08 \\
        adolescent\#010 & 15.07 & 54.26 & 17 & 33.17 \\
        child\#001 & 28.62 & 103.02 & 9 & 17.47 \\
        child\#002 & 27.51 & 99.02 & 9 & 18.18 \\
        child\#003 & 31.21 & 112.35 & 8 & 16.02 \\
        child\#004 & 25.23 & 90.84 & 12 & 19.82 \\
        child\#005 & 12.21 & 43.97 & 10 & 40.93 \\
        child\#006 & 24.72 & 89.00 & 8 & 20.22 \\
        child\#007 & 13.81 & 49.71 & 9 & 36.21 \\
        child\#008 & 23.26 & 83.74 & 10 & 21.49 \\
        child\#009 & 28.75 & 103.48 & 7 & 17.39 \\
        child\#010 & 24.21 & 87.16 & 12 & 20.65 \\
\end{tabular}
    \caption{Patients' Parameters}
    \label{tab:patient}
\end{table}

We use the following formula to calculate our bolus insulin amount

\begin{equation}
    \textup{bolus} = \frac{\textup{meal}}{\textup{CarbF}} + \textup{I}(\textup{glucose} > 150)*\frac{\textup{currentBG}-\textup{targetBG}}{\textup{CorrF}}
\end{equation}
where $\textup{targetBG}=140$ and $\textup{I}(\cdot)$ is the indicator function. The baseline calculator computes $\textup{CarbF}$ and $\textup{CorrF}$ via formula $\textup{CarbF}=500/\textup{TDI},\textup{CorrF}=1800/\textup{TDI}$ suggested in \citep{fox2020deep}. We display the calculated patients' parameters in \Cref{tab:patient}.

\subsection{Running time details}

We use the open-source UVA/Padova model package Simglucose \citep{Xue2018Simglucose} to simulate blood glucose histories for patients. Generating 1000 BG histories for each patient consumes 3 hours. Running RBAI algorithms with $|\mathcal{Z}|=75$ takes less than 10 minutes. These experiments are conducted on a laptop equipped with an 11th Gen Intel(R) i7-11800H CPU, utilizing multiple cores for parallel processing.

\section{Computation Complexity}

We provide a brief overview of the computational complexity involved in Algorithm \ref{alg:robust_algo}. The near-optimal design is computed using Frank-Wolfe algorithm \citep{Fedorov_1972}. With a suitable initialization constant factor, an approximation of the optimal design $\lambda_t^*$ in \Cref{eq:eq14} can be computed in $O(|\mathcal{Z}|^2 d^2\log\log(d))$ operations.

\end{document}